\documentclass[journal]{IEEEtran}

\usepackage{xcolor,soul,framed} %,caption

\colorlet{shadecolor}{yellow}
\usepackage[pdftex]{graphicx}
\graphicspath{{../pdf/}{../jpeg/}}
\DeclareGraphicsExtensions{.pdf,.jpeg,.png}

\usepackage[font=small]{caption}
\usepackage[cmex10]{amsmath}
\usepackage{array}
\usepackage{mdwmath}
\usepackage{multirow}
\usepackage{booktabs, siunitx}
\usepackage{mdwtab}
\usepackage{eqparbox}
\usepackage{url}
\usepackage{amsfonts}
\usepackage{amsthm}
\newtheorem{theorem}{Theorem}

\usepackage{amssymb}
\usepackage{pifont}
\newcommand{\cmark}{\ding{51}}%
\newcommand{\xmark}{\ding{55}}%

\usepackage{ulem}  
\usepackage{algorithm}
\usepackage{algpseudocode}
\usepackage{subcaption}
\usepackage{longtable}
\usepackage{multicol} 
\usepackage{xcolor}
\usepackage{tablefootnote}
\usepackage{todonotes}
\usepackage{float}
\usepackage{adjustbox}
\usepackage{tabularx} % Include this in your preamble
\usepackage{hyperref}
\usepackage{dirtytalk}
\usepackage{morewrites}
\begin{document}
% \bstctlcite{IEEEexample:BSTcontrol}
    \title{MCPST: A Multi-Phase Consensus and Spatio-Temporal Learning for Few-Shot Traffic Forecasting via Physical Dynamics}
    
  \author{Abdul Joseph Fofanah, ~\IEEEmembership{Member,~IEEE,}
        Lian Wen,~\IEEEmembership{Member,~IEEE,}
        David Chen,~\IEEEmembership{Member,~IEEE}

 \thanks{This work was supported in part by Griffith University under Grant 58455.}
 
\thanks{Abdul Joseph Fofanah, Lian Wen, and David Chen are with the School of Information and Communication Technology, Griffith University, Brisbane, 4111, Australia. (e-mail: abdul.fofanah, l.wen, david.chen, orcid: 0000-0001-8742-9325; 0000-0002-2840-6884; 0000-0001-8690-7196)}

\thanks{\textit{Corresponding Author:}  abdul.fofanah@griffithuni.edu.au}
}

% The paper headers
\markboth{IEEE Transactions on Internet of Things, January~2026}{Abdul Joseph \MakeLowercase{\textit{Fofanah et al.}}:PIMCST: Physics-Informed Multi-Phase Consensus and Spatio-Temporal  Few-Shot Learning for Traffic Flow Forecasting}

% ====================================================================
\maketitle

\begin{abstract}
 Accurate traffic flow prediction remains a fundamental challenge in intelligent transportation systems, particularly in cross-domain, data-scarce scenarios where limited historical data hinders model training and generalisation. The complex spatio-temporal dependencies and nonlinear dynamics of urban mobility networks further complicate few-shot learning across different cities. This paper proposes MCPST, a novel Multi-phase Consensus Spatio-Temporal framework for few-shot traffic forecasting that reconceptualises traffic prediction as a multi-phase consensus learning problem. Our framework introduces three core innovations: (1) a multi-phase engine that models traffic dynamics through diffusion, synchronisation, and spectral embeddings for comprehensive dynamic characterisation; (2) an adaptive consensus mechanism that dynamically fuses phase-specific predictions while enforcing consistency; and (3) a structured meta-learning strategy for rapid adaptation to new cities with minimal data. We establish extensive theoretical guarantees, including representation theorems with bounded approximation errors and generalisation bounds for few-shot adaptation. Through experiments on four real-world datasets, MCPST outperforms fourteen state-of-the-art methods in spatio-temporal graph learning methods, dynamic graph transfer learning methods, prompt-based spatio-temporal prediction methods and  cross-domain few-shot settings, improving prediction accuracy while reducing required training data and providing interpretable insights. The implementation code is available at \url{https://github.com/afofanah/MCPST}.

\end{abstract}

\begin{IEEEkeywords}
Few-Shot Learning, Traffic Forecasting, Multi-Phase Consensus, Spatio-Temporal Forecasting, Diffusion-Synchronisation Dynamics, Meta-Learning Adaptation
\end{IEEEkeywords}

\IEEEpeerreviewmaketitle

\section{Introduction}
\label{sec:introduction}

\IEEEPARstart{A}{ccurate} traffic flow forecasting is a cornerstone of intelligent transportation systems, enabling proactive congestion management, efficient route optimisation, and sustainable urban mobility \cite{li2017diffusion, wu2020connecting}. Despite significant advances in spatio-temporal graph neural networks, accurate prediction remains challenging in data-scarce scenarios and across diverse urban environments due to nonlinear dynamics, complex spatial dependencies, and multi-scale temporal variations \cite{zhang2021traffic, jiang2023spatio}.

Existing traffic prediction approaches predominantly rely on data-intensive graph neural networks and temporal sequence models that require extensive historical data \cite{carianni2025overview, zheng2024exploring}. These methods often assume stationary patterns and overlook the multi-modal nature of traffic dynamics, which involves concurrent processes such as congestion propagation, rhythmic flow patterns, and structural network influences \cite{jia2020predicting, ji2023spatio}. More critically, they lack principled integration of complementary dynamic principles: diffusion for propagation, synchronisation for rhythmic patterns, and spectral analysis for structural influence, that could provide the strong inductive biases necessary for robust few-shot learning and cross-domain generalisation. Standard message-passing schemes also fail to leverage intrinsic consensus and multi-phase regularities that could improve generalisation with limited data \cite{rodrigues2016kuramoto, shabaz2025ai}.

The challenge of few-shot traffic prediction stems from several fundamental limitations: spatio-temporal models require extensive city-specific data to learn complex dependencies \cite{fofanah2025chamformer, li2024physics}; traditional approaches lack mechanisms for integrating multi-phase dynamics; and fixed architectures cannot adapt to varying traffic regimes across urban layouts with limited samples \cite{liu2024spatial, finn2017model}. This problem becomes particularly critical when traffic sensors are newly deployed, historical data is incomplete, or systems must rapidly adapt to unprecedented events \cite{deng2019exploring}.

Current methods exhibit fragmented modelling approaches that specialise in spatial dependencies through graph networks \cite{wu2020connecting}, temporal patterns through sequence models \cite{fofanah2025chamformer}, or statistical regularities through time series analysis \cite{li2017diffusion}. While some works incorporate transfer learning \cite{Hu2024PromptBasedSG} and uncertainty quantification \cite{qian2023towards}, they lack a unified framework that captures the multi-phase consensus dynamics inherent in traffic systems, limiting their ability to generalise under few-shot conditions.

 This paper introduces \textbf{MCPST}, effecting a paradigm shift in traffic forecasting by developing the first unified framework that models traffic as a multi-phase consensus system. We reconceptualise traffic prediction not merely as a spatio-temporal pattern recognition task, but as a consensus learning problem where distinct yet complementary dynamic processes of diffusion, synchronisation, and structural propagation that interact and are fused to achieve robust predictions with limited data. Our work establishes a new foundation for few-shot learning in spatio-temporal systems by demonstrating that:
\begin{enumerate}
    \item \textit{Traffic is inherently a multi-phase system}, where dynamics emerge from interactions that exhibit regularities across different urban environments, enabling cross-domain generalisation through consensus-based transfer rather than data-based interpolation.
    
    \item \textit{Modelling requires multi-phase consensus}: No single modelling perspective captures the full complexity of traffic flow, which requires simultaneous integration of diffusion, synchronisation, and structural propagation mechanisms.
    
    \item \textit{Consensus as an inductive bias}: The multi-phase framework provides powerful, physics-inspired regularities that dramatically reduce sample complexity, making accurate prediction possible with orders of magnitude less data than purely monolithic approaches.
\end{enumerate}

This work represents a foundational departure from incremental architectural improvements by introducing a complete multi-phase consensus theoretical framework for traffic prediction with provable guarantees.

We instantiate this paradigm through \textit{MCPST (Multi-Phase Consensus and Spatio-Temporal learning)}, a novel framework where neural networks learn to discover and fuse underlying multi-phase dynamics. MCPST integrates three core dynamic modules: \textit{diffusion} (for congestion propagation), \textit{synchronisation} (for traffic rhythms), and \textit{spectral-structural analysis} (for network influence) that is within a unified encoder. These components are fused via an adaptive multi-phase consensus mechanism that dynamically weights their contributions based on reliability, enabling robust, data-efficient prediction across diverse urban layouts with severely limited training data.

Our work makes the following contributions:

\begin{itemize}
    \item We propose \textit{MCPST: a Multi-Phase Consensus and Spatio-Temporal framework} for few-shot traffic forecasting, unifying three complementary phases: diffusion-based propagation, synchronisation-based rhythm modelling, and spectral-structural influence analysis within a novel architecture that enables comprehensive few-shot learning through multi-phase consensus.
    
    \item We develop an adaptive multi-phase consensus fusion mechanism with reliability-aware attention weighting and consistency regularisation, along with horizon-specific prediction heads featuring built-in uncertainty quantification. This enables robust adaptation across diverse urban layouts by dynamically balancing phase contributions based on estimated reliability.
    
    \item We establish comprehensive theoretical guarantees for multi-phase consensus traffic prediction, including three novel representation theorems (Theorems \ref{thm:diffusion_representation}, \ref{thm:sync_representation}, \ref{thm:spectral_representation}) with bounded approximation errors for each dynamic phase. We further prove unified error bounds for the integrated framework and generalisation guarantees for few-shot adaptation (Theorem \ref{thm:consensus_fusion}).
    
    \item Through extensive experiments on four benchmark datasets, we demonstrate that MCPST outperforms fourteen state-of-the-art methods in few-shot cross-domain settings, achieving significant improvements in prediction accuracy while reducing required training data by up to 90\%. Our framework provides interpretable multi-phase analysis and establishes a new state-of-the-art for data-efficient traffic prediction.
\end{itemize}

The remainder of the paper is structured as follows. Section~\ref{sec:preliminary} introduces preliminaries and the few-shot cross-domain traffic prediction problem formulation. Section~\ref{sec:method} presents our MCPST methodology with detailed descriptions of each dynamic phase and the meta-learning framework. Section~\ref{sec:theoretical_analysis} provides theoretical analysis of convergence, stability, and generalisation guarantees. Section~\ref{sec:experiments} reports experimental results and analysis under cross-domain few-shot settings. Section~\ref{sec:conclusion} concludes with findings and future directions.
\section{Related Works}
\label{sec:related_works}

\subsection{Multi-Phase and Structured Learning}
Structured learning approaches have become increasingly important for capturing the inherent dynamics in traffic systems. In traffic forecasting, methods have incorporated structured models based on diffusion processes \cite{chen2024traffic}, neural networks with differential constraints \cite{li2025embedding}, and temporal pattern analysis through dynamic systems \cite{huang2019diffusion, ji2022stden}. Recent works have explored diffusion principles for graph smoothing \cite{nie2025predicting} and synchronisation dynamics for temporal modelling \cite{rodrigues2016kuramoto}, while spectral graph theory has been applied to capture structural properties \cite{chung1997spectral, lv2023ts, wang2022spectral}. However, these approaches typically operate within isolated modelling paradigms and lack integration with comprehensive spatio-temporal frameworks, particularly in few-shot scenarios where structured consistency could provide crucial inductive biases for generalisation across domains with limited data.

\subsection{Few-Shot and Cross-Domain Adaptation}
Few-shot learning methods have gained traction for addressing data scarcity in traffic prediction, with meta-learning approaches like MAML \cite{finn2017model} and graph meta-learning \cite{feng2024federated, fang2020meta} enabling rapid adaptation to new urban environments. Cross-domain transfer techniques have employed adversarial training \cite{zhang2021diverse}, graph structure matching \cite{peng2021dynamic}, and domain alignment \cite{li2024physics} to bridge distribution gaps between cities. Uncertainty quantification methods utilising Bayesian networks \cite{sun2006bayesian}, Monte Carlo dropout \cite{qian2023towards}, and deep ensembles \cite{zhan2018consensus} provide reliability estimates crucial for few-shot scenarios. However, current approaches lack integration of multi-phase principles as consistent constraints during adaptation, missing opportunities to leverage structured dynamic patterns for more robust cross-domain generalisation.

\section{Preliminaries}
\label{sec:preliminary}

We formalise the multi-phase few-shot traffic forecasting problem with the following definitions that establish the mathematical foundation for our MCPST framework.

\noindent \textit{Definition 1: Spatio-Temporal Traffic Network.} A traffic network at time \(t\) is represented as \(\mathcal{G}^{(t)} = (\mathcal{V}, \mathcal{E}, \mathbf{X}^{(t)})\), where \(\mathcal{V} = \{v_1, \dots, v_N\}\) is a set of \(N\) traffic sensor nodes, \(\mathcal{E} \subseteq \mathcal{V} \times \mathcal{V}\) denotes edges representing spatial dependencies (e.g., road connectivity), and \(\mathbf{X}^{(t)} \in \mathbb{R}^{N \times F}\) is the node feature matrix containing traffic measurements (flow, speed, occupancy). The historical spatio-temporal input is a sequence \(\mathbf{X}_{[t-T+1:t]} \in \mathbb{R}^{T \times N \times F}\) over \(T\) time steps.

\noindent \textit{Definition 2: Multi-Phase Traffic Features.} The representation comprises three complementary feature sets derived from distinct dynamic paradigms:

\begin{itemize}
    \item \textit{Diffusion Features} \(\mathbf{F}_{\text{diff}} \in \mathbb{R}^{N \times D_d}\): Encoded from traffic propagation dynamics, capturing congestion wave formation and flow distribution through learnable propagation parameters.

    \item \textit{Synchronisation Features} \(\mathbf{F}_{\text{sync}} \in \mathbb{R}^{N \times D_s}\): Derived from coupled dynamics, representing traffic rhythm patterns with phase variables \(\phi_k \in [0, 2\pi)\), intrinsic frequencies \(\nu_k\), and coupling strengths \(\gamma_k\).

    \item \textit{Spectral Structural Features} \(\mathbf{F}_{\text{spec}} \in \mathbb{R}^{N \times D_p}\): Obtained from graph Laplacian eigendecomposition \(\mathbf{L}\mathbf{\Psi} = \mathbf{\Psi}\mathbf{\Lambda}\), capturing network topology properties through eigenvectors \(\mathbf{\Psi}\) and eigenvalues \(\mathbf{\Lambda}\), with spectral gap \(g = \lambda_2 - \lambda_1\) indicating connectivity.
\end{itemize}

\noindent \textit{Definition 3: Multi-Phase Few-Shot Forecasting Task.} Given a target traffic network \(\mathcal{T}\) with only \(K\) labelled samples (small \(K\), e.g., 1-10 time periods), learn a prediction function \(f_\theta: \mathbf{X}_{[t-T+1:t]}^{(\mathcal{T})} \rightarrow \mathbf{Y}_{[t+1:t+H]}^{(\mathcal{T})}\) by transferring knowledge from source networks with abundant data while adapting to \(\mathcal{T}\)'s unique spatio-temporal patterns through multi-phase consensus learning.

\noindent \textit{Definition 4: Multi-Phase Conditioned Meta-Learning Episode.} Adopting episodic meta-learning, each episode samples a support set \(\mathcal{K}^{\text{supp}} = \{(\mathbf{X}_k, \mathbf{Y}_k, \mathbf{F}_{\text{multi}}^{(k)})\}_{k=1}^{K}\) and a query set \(\mathcal{K}^{\text{query}}\) from a distribution over traffic scenarios \(p(\mathcal{S})\). Model parameters \(\theta\) are adapted via a phase-guided optimisation procedure:
\begin{equation}
    \theta' = \text{Adapt}(\theta, \mathcal{K}^{\text{supp}}, \mathbf{F}_{\text{multi}}),
\end{equation}
minimising a multi-objective loss that combines prediction accuracy with phase consistency:
\begin{align}
    \min_{\theta} \mathbb{E}_{(\mathcal{K}^{\text{supp}}, \mathcal{K}^{\text{query}}) \sim p(\mathcal{S})} &\left[ \mathcal{L}_{\text{pred}}(\theta'; \mathcal{K}^{\text{query}}) \right. \nonumber \\
    &\left. + \lambda \mathcal{L}_{\text{phase}}(\theta'; \mathcal{K}^{\text{query}}) \right].
\end{align}

\noindent \textit{Problem: Multi-Phase Consensus for Few-Shot Traffic Forecasting.} Given \(M\) source traffic networks \(\{\mathcal{S}_i\}_{i=1}^M\) with spatio-temporal sequences \(\mathbf{X}^{(i)}_{[t-T+1:t]}\), multi-phase features \(\mathbf{F}_{\text{multi}}^{(i)} = [\mathbf{F}_{\text{diff}}^{(i)} \oplus \mathbf{F}_{\text{sync}}^{(i)} \oplus \mathbf{F}_{\text{spec}}^{(i)}]\), and a target network \(\mathcal{T}\) with limited support set \(\mathcal{D}^{\text{supp}}_{\mathcal{T}} = \{(\mathbf{X}_k, \mathbf{Y}_k, \mathbf{F}_{\text{multi}}^{(k)})\}_{k=1}^K\) ($K$ small), learn a phase-aware adaptation mechanism that utilises multi-phase consensus to adapt a meta-learned model \(F_{\text{meta}}\) for few-shot forecasting:
\begin{align}
    \hat{\mathbf{Y}}_{[t+1:t+H]}^{(\mathcal{T})} &= F_{\Theta}\left[F_{\text{meta}}\left(\mathbf{X}^{(i)}, \mathbf{F}_{\text{multi}}^{(i)}\right), \mathbf{F}_{\text{multi}}^{(\mathcal{T})}, \right. \nonumber \\
    &\quad \left. \mathbf{X}_{[t-T+1:t]}^{(\mathcal{T})}, \mathcal{D}^{\text{supp}}_{\mathcal{T}}\right]
\end{align}
where \(\hat{\mathbf{Y}}\) and \(\mathbf{Y}\) are predicted and ground-truth traffic states. The meta-parameters \(\Theta\) are optimised by minimising the expected forecasting loss with phase consistency constraints:
\begin{equation}
\min_{\Theta} \mathbb{E}_{\mathcal{S}} \left[ \mathcal{L}_{\text{pred}}\left(\mathbf{Y}, \hat{\mathbf{Y}}\right) + \sum_{j=1}^{3} \lambda_j \mathcal{L}_{\text{phase}}^{(j)}\left(\hat{\mathbf{Y}}\right) \right]
\end{equation}
where $\mathbf{Y} = \mathbf{Y}{[t+1:t+H]}^{(\mathcal{T})}$, $\hat{\mathbf{Y}} = \hat{\mathbf{Y}}{[t+1:t+H]}^{(\mathcal{T})}$, and $\mathcal{S} = (\mathcal{T}, \mathcal{D}^{\text{supp}}_{\mathcal{T}}, \mathcal{D}^{\text{query}}_{\mathcal{T}}) \sim p(\mathcal{S})$.
Here, the multi-phase features \(\mathbf{F}_{\text{multi}}\) serve as both conditioning signals for adaptive modelling and regularizers for maintaining dynamic consistency in few-shot learning scenarios.

\section{The Proposed Method}
\label{sec:method}

\begin{figure*}[ht]
    \centering
    \includegraphics[width=\linewidth]{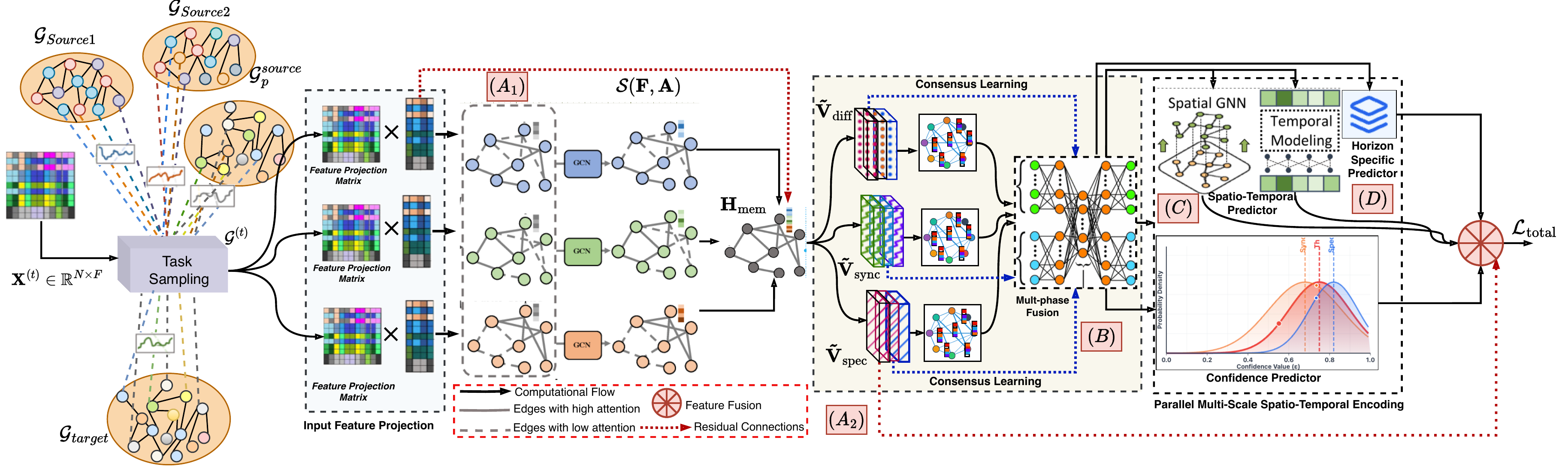}
    \caption{The proposed MCPST Architecture for few-shot traffic forecasting: ($A_1$) graph-based (GNN) spatio-temporal feature extractions; ($A_2$) multi-phase dynamic modelling through diffusion, synchronisation, and spectral analysis; ($B$) adaptive multi-phase consensus fusion  with reliability-aware attention; ($C$) multi-scale spatio-temporal encoding with LSTM-transformer processing; and ($D$) horizon-specific prediction  with uncertainty quantification and neural consensus.}
    \label{fig:mcpst_architecture}
\end{figure*}

In this section, we present MCPST's architecture designed for few-shot traffic prediction through multi-phase dynamics and adaptive consensus. As illustrated in Fig.~\ref{fig:mcpst_architecture}, the framework comprises four interconnected components: (1) multi-phase feature learning through diffusion, synchronisation, and spectral analysis (Sec.~\ref{sec:physics_informed_learning}); (2) adaptive multi-phase consensus fusion with reliability-aware attention weighting (Sec.~\ref{sec:consensus_fusion}); (3) parallel multi-scale spatio-temporal encoding with LSTM networks and transformer encoders (Sec.~\ref{sec:physics_multiscale_encoding}); and (4) horizon-specific prediction with uncertainty quantification and neural consensus (Sec.~\ref{sec:horizon_prediction}). These components work synergistically to enable robust forecasting with limited data while maintaining interpretability through physics-informed regularities.

Formally, given a target traffic network $\mathcal{N}$ with limited labelled data, the MCPST framework performs few-shot forecasting by integrating multi-phase knowledge, temporal patterns from historical data, and adaptive feature fusion:
\begin{align}
    \hat{\mathbf{Y}}_{[t+1:t+H]}^{(\mathcal{N})} &= F_{\Theta}\left[\mathcal{E}_{\text{diff}}(\mathbf{M}, \mathbf{A}), \mathcal{E}_{\text{sync}}(\mathbf{M}, \mathbf{A}), \right. \nonumber \\
    &\quad \left. \mathcal{E}_{\text{spec}}(\mathbf{A}), \mathcal{E}_{\text{temp}}(\mathbf{M}), \mathcal{F}_{\text{cons.}}\right],
    \label{eq:overall_prediction}
\end{align}
where $F_{\Theta}$ is realised by the components detailed below. Here, $\mathcal{E}_{\text{diff}}$ implements diffusion for traffic propagation modelling, $\mathcal{E}_{\text{sync}}$ captures synchronisation for traffic rhythm analysis, $\mathcal{E}_{\text{spec}}$ provides spectral decomposition for network topology understanding, $\mathcal{E}_{\text{temp}}$ represents multi-scale temporal encoding, and $\mathcal{F}_{\text{cons.}}$ denotes the adaptive fusion mechanism that integrates multi-phase predictions with learnable consensus weights. The model parameters $\Theta$ are learned via a meta-learning objective that minimises the expected forecasting loss across diverse traffic scenarios while maintaining multi-phase consistency constraints, as formalised in the optimisation objective in Eq.~\ref{eq:total_loss}.

\subsection{Multi-Phase Feature Learning} 
\label{sec:physics_informed_learning}

Unlike traditional neural networks relying solely on data correlations, MCPST integrates three structured dynamic paradigms: diffusion (spatial propagation), synchronisation (temporal rhythms), and spectral analysis (structural patterns). These provide strong inductive biases for robust few-shot prediction, implemented differentiably for end-to-end optimisation while preserving interpretability.

\subsubsection{Diffusion Phase: Modelling Traffic Propagation and Congestion Waves}
\label{sec:thermodiffusion}

Traffic flow exhibits properties remarkably similar to diffusion processes, where congestion propagates through networks following gradient flows. We model traffic propagation as a diffusion process through the network topology, with congestion patterns emerging as accumulation at critical nodes.

\paragraph{Dynamic Formulation} We establish a diffusion process governed by the continuous equation: $\frac{\partial \mathbf{u}}{\partial t} = \kappa \nabla^2 \mathbf{u}$, where $\mathbf{u}$ represents the traffic state (e.g., congestion intensity), $\kappa$ is the diffusivity controlling propagation speed, and $\nabla^2$ is the Laplace operator encoding network connectivity. Key dynamic correspondences are: state $\mathbf{u}$ corresponds to traffic congestion intensity; diffusivity $\kappa$ inversely relates to road capacity, governing how quickly congestion waves propagate; capacity $C$ represents a node's ability to absorb traffic inflow before propagating congestion; and sources $Q$ model origins and sinks of congestion.

\paragraph{Discretized Network Diffusion with Learnable Parameters}
We discretize the continuum model over the traffic network graph $\mathcal{G}$, where the graph Laplacian $\mathbf{L} = \text{diag}(\mathbf{A}\mathbf{1}_N) - \mathbf{A}$ acts as the discrete analog of the $\nabla^2$ operator. The system dynamics incorporate learnable parameters that allow the model to discover the effective dynamics of a specific network:
\begin{equation}
\begin{aligned}
Q(\mathbf{F}) &= \sigma_{\text{sigmoid}}\left(\mathbf{W}_{q2}\sigma_{\text{ReLU}}\left(\mathbf{W}_{q1}\mathbf{F} + \mathbf{b}_{q1}\right)\right) \in \mathbb{R}^{B \times N \times 1} \\
\mathbf{T}^{(0)} &= \mathbf{F} \odot Q(\mathbf{F})\mathbf{1}_D^\top \in \mathbb{R}^{B \times N \times D}, \\
\mathbf{L} &= \text{diag}(\mathbf{A}\mathbf{1}_N) - \mathbf{A} \in \mathbb{R}^{N \times N} \\
\mathbf{T}^{(t+1)} &= \mathbf{T}^{(t)} - \frac{\Delta t \cdot \kappa}{C} \mathbf{L} \mathbf{T}^{(t)}, \quad t=0,\dots,K_{\text{diff}}-1
\end{aligned}
\end{equation}
Here, $\kappa = \text{clip}(\kappa_{\text{learned}}, 0.01, 0.3)$ and $C = \text{clip}(C_{\text{learned}}, 0.5, 2.0)$ are learnable diffusivity and capacity parameters, $\Delta t = 0.1/K_{\text{diff}}$ ensures numerical stability, and $K_{\text{diff}}$ is the number of diffusion steps. The source estimator $Q(\mathbf{F})$ identifies nodes likely to be congestion origins (e.g., high inflow), gating the input features $\mathbf{F}$ to generate the initial distribution $\mathbf{T}^{(0)}$. The iterative update simulates one temporal increment of congestion wave propagation: traffic state flows from nodes with higher values to connected nodes with lower values, at a rate controlled by the learned $\kappa$ and $C$. The final diffused state is decoded into traffic forecasts:
\begin{equation}
\mathbf{\tilde{V}}_{\text{diff}} = \mathbf{W}_{\text{flow}}\sigma_{\text{ReLU}}\left(\mathbf{W}_f\mathbf{T}^{(K_{\text{diff}})} + \mathbf{b}_f\right) \in \mathbb{R}^{B \times N \times H}.
\end{equation}

\paragraph{Diffusion Representation Theorem}
This differentiable diffusion module provides a flexible functional class for representing propagation. We formalise this capability in the \textit{Diffusion Representation Theorem} (Theorem~\ref{thm:diffusion_representation}), which guarantees that with sufficient diffusion steps $K_{\text{diff}}$ and proper parameters, the module can approximate any smooth traffic propagation pattern with bounded error. Formally, for any Lipschitz-continuous traffic propagation function $g: \mathcal{N} \times [0,T] \to \mathbb{R}$, there exist parameters $\{\kappa, C, \mathbf{W}_{\cdot}\}$ such that:
\begin{equation}
\mathbb{E}_{n \sim \mathcal{N}, t \sim [0,T]}\left[\left|g(n,t) - \mathbf{\tilde{V}}_{\text{diff}}(n,t)\right|\right] \leq \epsilon(\kappa, C, K_{\text{diff}})
\end{equation}
where $\epsilon$ decreases with increasing $K_{\text{diff}}$ and proper parameter settings. This ensures the module can represent diverse congestion propagation patterns essential for traffic forecasting. The detailed theorem and proof are in Theoretical Analysis~\ref{sec:theoretical_analysis}, Theorem~\ref{thm:diffusion_representation}.

\subsubsection{Synchronisation Phase: Capturing Traffic Rhythms and Periodicity}
\label{sec:kuramoto_sync}

Urban traffic exhibits remarkable synchronisation patterns, where adjacent intersections and road segments develop coordinated flow oscillations. Drawing inspiration from coupled oscillator theory, we model these temporal rhythms using synchronisation dynamics, representing traffic stations as oscillators whose phases capture periodic patterns and whose synchronisation reflects coordinated flow dynamics.

\paragraph{Dynamic Foundation: The Coupled Oscillator Model}
The continuous synchronisation model describes phase evolution for coupled oscillators:
\begin{equation}
\frac{d\phi_k}{dt} = \nu_k + \gamma \sum_{j=1}^{N} A_{kj}\sin(\phi_j - \phi_k),
\end{equation}
where $\phi_k$ represents the phase of traffic rhythm at node $k$, $\nu_k$ its intrinsic frequency (capturing local periodic patterns like traffic light cycles), and $\gamma$ the coupling strength between connected nodes governing synchronisation. This formulation allows us to model how individual traffic patterns influence each other through network connectivity, leading to emergent synchronised behaviour that mirrors real-world traffic wave propagation.

\paragraph{Differentiable Implementation with Learnable Parameters}
We implement discrete phase dynamics with learnable frequencies and adaptive coupling to capture network-specific traffic rhythms:
\begin{equation}
    \begin{aligned}
\nu(\mathbf{F}) &= \mathbf{W}_{\nu2}\tanh\left(\mathbf{W}_{\nu1}\mathbf{F} + \mathbf{b}_{\nu1}\right) \in \mathbb{R}^{B \times N \times 1} \\
\gamma_{\text{local}}(\mathbf{F}) &= \sigma\left(\mathbf{W}_{\gamma2}\text{ReLU}\left(\mathbf{W}_{\gamma1}\mathbf{F} + \mathbf{b}_{\gamma1}\right)\right) \in \mathbb{R}^{B \times N \times 1} \\
\phi_k^{(t+1)} &= \phi_k^{(t)} + \Delta t\left[\nu_k + \mathcal{C}_k^{(t)}\right] \mod 2\pi
\end{aligned}
\end{equation}
where:
\begin{equation*}
    \mathcal{C}_k^{(t)} = \gamma_{\text{global}} \gamma_{\text{local},k} \sum_{j=1}^{N} A_{kj}\sin(\phi_j^{(t)} - \phi_k^{(t)})
\end{equation*}
is the adaptive coupling term, $\nu(\mathbf{F})$ is the frequency estimation, $\gamma_{\text{local}}(\mathbf{F})$ the local coupling, $\phi_k^{(0)} = \arctan2\left(\|\mathbf{f}_k\|_2, \sum_{d=1}^{D} f_{k,d}\right)$ initialises the phase from input features, $\gamma_{\text{global}} = \text{clip}(\gamma_{\text{learned}}, 0.1, 1.0)$ is a global learnable coupling strength, $\Delta t = 0.1$ ensures numerical stability, and the modulo operation maintains phase periodicity. The network learns both local frequency patterns ($\nu_k$) and how strongly nodes influence each other's rhythms ($\gamma_{\text{local},k}$), while $\gamma_{\text{global}}$ controls overall synchronisation tendency across the entire network.

\paragraph{Synchronisation Features and Prediction}
The final phase configuration generates rhythmic features that encode coordinated traffic patterns:
\begin{equation}
\begin{aligned}
\mathbf{Z}_{\text{sync}} &= [\mathbf{F}; \cos(\boldsymbol{\phi}^{(K_{\text{sync}})}); \sin(\boldsymbol{\phi}^{(K_{\text{sync}})})] \in \mathbb{R}^{B \times N \times (D+2)} \\
\mathbf{\tilde{V}}_{\text{sync}} &= \mathbf{W}_{\text{phase}}\sigma_{\text{ReLU}}\left(\mathbf{W}_p\mathbf{Z}_{\text{sync}} + \mathbf{b}_p\right) \in \mathbb{R}^{B \times N \times H}
\end{aligned}
\end{equation}

The degree of synchronisation across the network is quantified by the order parameter, which measures phase coherence:
\begin{equation}
r = \frac{1}{N}\left|\sum_{k=1}^{N} \exp(i\phi_k^{(K_{\text{sync}})})\right| \in \mathbb{R}^{B}
\end{equation}
where $r \approx 1$ indicates strong synchronisation (coordinated traffic waves) and $r \approx 0$ indicates independent oscillations. This parameter serves as an interpretable metric of network-wide coordination that can inform traffic management decisions.

\paragraph{Synchronisation Expressivity Theorem}
We establish the \textit{Synchronisation Expressivity Theorem}~\ref{thm:sync_representation} proving that our synchronisation implementation can represent any periodic traffic pattern with bounded error. For any periodic traffic function $p(t)$ with fundamental frequency $\omega_0$ and $M$ harmonics, there exist parameters $\{\nu_k, \gamma_{\text{global}}, \gamma_{\text{local}}\}$ such that:
\begin{equation}
\mathbb{E}_{t \sim [0,T]}\left[|p(t) - \mathbf{\tilde{V}}_{\text{sync}}(t)|\right] \leq \epsilon(\omega_0, M, K_{\text{sync}})
\end{equation}
where $\epsilon$ decreases with increasing $K_{\text{sync}}$ and proper coupling strength. The model captures diverse temporal rhythms (rush hour to off-peak) through learned parameters: $\nu_k$ for local traffic cycles and $\gamma$ for network-wide coordination. See Theorem~\ref{thm:sync_representation} for theoretical details.

\subsubsection{Structural Spectral Embedding Phase: Analysing Network Topology and Connectivity}
\label{sec:spectral_embedding}

The global structure of traffic networks fundamentally influences flow distribution and congestion propagation. Utilising spectral graph theory, we decompose the network into its fundamental vibrational modes, where low-frequency eigenvectors capture large-scale connectivity patterns and high-frequency eigenvectors encode local structural details.

\paragraph{Dynamic Foundation: Spectral Graph Theory and Network Vibrations}
The normalised graph Laplacian's eigendecomposition reveals structural properties: $\mathbf{L}_{\text{norm}} = \mathbf{\Psi} \mathbf{\Lambda} \mathbf{\Psi}^\top$, where $\mathbf{\Lambda} = \text{diag}(\lambda_1, \dots, \lambda_N)$ contains eigenvalues ($0 = \lambda_1 \leq \lambda_2 \leq \dots \leq \lambda_N \leq 2$) and $\mathbf{\Psi}$ contains corresponding eigenvectors encoding structural patterns. The spectral gap $g = \lambda_2 - \lambda_1$ quantifies network connectivity strength, with larger gaps indicating well-connected structures that facilitate smoother traffic flow.

\paragraph{Differentiable Implementation with Learnable Spectral Features}
We compute spectral features through differentiable operations:
\begin{equation}
\begin{aligned}
\mathbf{L}_{\text{norm}} &= \mathbf{I} - \mathbf{D}^{-1/2}\mathbf{A}\mathbf{D}^{-1/2} \in \mathbb{R}^{N \times N} \\
\mathbf{F}_{\text{spec}} &= \mathbf{W}_{s2}\sigma_{\text{ReLU}}\left(\mathbf{W}_{s1}\mathbf{\Psi}_{:,:K} + \mathbf{b}_{s1}\right) \in \mathbb{R}^{B \times N \times (D/4)} \\
g &= \lambda_2 - \lambda_1 \in \mathbb{R}^{B \times N}
\end{aligned}
\end{equation}
where $\mathbf{\Psi}_{:,:K}$ represents the first $K=6$ eigenvectors retained to capture the most significant structural patterns, balancing representational capacity with computational efficiency.

\paragraph{Structural Features and Prediction}
Spectral embeddings generate topology-aware predictions by integrating connectivity patterns:
\begin{equation}
\mathbf{\tilde{V}}_{\text{spec}} = \mathbf{W}_{\text{spec-flow}}\sigma_{\text{ReLU}}\left([\mathbf{\Psi}_{:,:K}; g\mathbf{1}_N]\right) \in \mathbb{R}^{B \times N \times H}
\end{equation}
This formulation enables the model to leverage both local structural information (encoded in individual eigenvectors) and global connectivity characteristics (captured by the spectral gap) for improved flow prediction.

\paragraph{Spectral Representation Theorem}
We establish the \textit{Spectral Representation Theorem} (Theorem~\ref{thm:spectral_representation}) proving that spectral features capture essential network connectivity information. For any graph signal $s: \mathcal{N} \to \mathbb{R}$ that is smooth with respect to the graph Laplacian (i.e., $\|\mathbf{L}_{\text{norm}} s\|_2$ is small), the spectral representation $\mathbf{F}_{\text{spec}}$ satisfies:
\begin{equation}
\|s - \mathbf{W}_r \mathbf{F}_{\text{spec}}\|_2 \leq C \cdot \lambda_{K+1}
\end{equation}
where $C$ depends on the smoothness of $s$ and $\lambda_{K+1}$ is the $(K+1)$-th eigenvalue. Retaining the first $K$ eigenvectors captures essential structural patterns, with approximation error bounded by $\lambda_{K+1}$. See Theorem~\ref{thm:spectral_representation} for details.

\subsection{Adaptive Multi-Phase Consensus Fusion}
\label{sec:consensus_fusion}

The three dynamic phases provide complementary perspectives on traffic dynamics, each with varying relevance under different traffic conditions. Rather than employing static ensemble weights, we introduce an adaptive consensus mechanism that dynamically weights phase-specific predictions based on their estimated reliability and current traffic regime.

\subsubsection{Phase-Aware Attention Mechanism}
Features from all three phases are integrated through attention-based weighting:
\begin{equation}
\begin{aligned}
\mathbf{F}_{\text{cat}}= [\mathbf{T}^{(K_{\text{diff}})} \oplus \mathbf{Z}_{\text{sync}} \oplus \mathbf{F}_{\text{spec}}] \in \mathbb{R}^{B \times N \times D_{\text{total}}} \\
\boldsymbol{\alpha} = \text{softmax}\left(\mathbf{W}_{\alpha2}\sigma_{\text{ReLU}}\left(\mathbf{W}_{\alpha1}\mathbf{F}_{\text{cat}} + \mathbf{b}_{\alpha1}\right)\right) \in \mathbb{R}^{B \times N \times 3} \\
\mathbf{F}_{\text{weighted}} = \sum_{i=1}^{3} \alpha_i \mathbf{F}^{(i)} \in \mathbb{R}^{B \times N \times D_{\text{total}}}
\end{aligned}
\end{equation}
where $\boldsymbol{\alpha}$ represents adaptive attention weights that reflect each phase's relevance, with $\alpha_1, \alpha_2, \alpha_3$ corresponding to the diffusion, synchronisation, and spectral phases respectively.

\subsubsection{Residual Fusion with Phase Preservation}
Weighted features undergo nonlinear transformation while preserving phase information:
\begin{align}
\mathbf{F}_{\text{fused}} &= \mathbf{W}_{\text{fuse2}}\sigma_{\text{ReLU}}\left(\mathbf{W}_{\text{fuse1}}\mathbf{F}_{\text{weighted}} + \mathbf{b}_{\text{fuse1}}\right) + \mathcal{R}(\mathbf{T}^{(K_{\text{diff}})}) \nonumber \\
&\in \mathbb{R}^{B \times N \times D}
\end{align}
where $\mathcal{R}$ is a residual connection that projects diffusion features to dimension $D$ if necessary, ensuring dynamic information is preserved through the fusion process. This design maintains gradient flow while allowing the model to emphasize different dynamic aspects under varying conditions.

\subsection{Parallel Multi-Scale Spatio-Temporal Encoding}
\label{sec:physics_multiscale_encoding}

To overcome single-resolution limitations, we introduce a multi-scale encoder with four innovations: parallel LSTMs at dyadic scales, a multi-phase conditioned transformer, explicit seasonal-trend decomposition, and stabilising graph convolution for spatial smoothing. This enables robust feature extraction critical for data-scarce few-shot forecasting.

\subsubsection{Multi-Scale LSTM Processing with Gated Temporal Dynamics}
Traffic patterns exhibit complex multi-scale dependencies spanning minute-level fluctuations to daily periodicities. To capture these diverse temporal scales, we deploy four parallel Long Short-Term Memory (LSTM) networks operating at dyadic resolutions. For each traffic node $n$, the LSTM cell implements sophisticated gating mechanisms that regulate information flow through time:

\begin{equation}
\begin{aligned}
\mathbf{f}_t = \sigma(\mathbf{W}_f [\mathbf{h}_{t-1}, \mathbf{x}_t] + \mathbf{b}_f), 
\mathbf{i}_t = \sigma(\mathbf{W}_i [\mathbf{h}_{t-1}, \mathbf{x}_t] + \mathbf{b}_i) \\
\tilde{\mathbf{c}}_t = \tanh(\mathbf{W}_c [\mathbf{h}_{t-1}, \mathbf{x}_t] + \mathbf{b}_c), 
\mathbf{c}_t = \mathbf{f}_t \odot \mathbf{c}_{t-1} + \mathbf{i}_t \odot \tilde{\mathbf{c}}_t \\
\mathbf{o}_t = \sigma(\mathbf{W}_o [\mathbf{h}_{t-1}, \mathbf{x}_t] + \mathbf{b}_o),
\mathbf{h}_t = \mathbf{o}_t \odot \tanh(\mathbf{c}_t)
\end{aligned}
\end{equation}
where $\sigma(z) = (1 + e^{-z})^{-1}$ denotes the sigmoid activation function, $\tanh(z) = \frac{e^z - e^{-z}}{e^z + e^{-z}}$ is the hyperbolic tangent activation, and $\odot$ represents element-wise multiplication. The learnable parameters include weight matrices $\mathbf{W}_{\{f,i,c,o\}} \in \mathbb{R}^{h \times (h+D)}$ and bias vectors $\mathbf{b}_{\{f,i,c,o\}} \in \mathbb{R}^h$, where $h$ is the hidden dimension and $D$ the input feature dimension.

The multi-scale processing framework operates on traffic series $\mathbf{X} \in \mathbb{R}^{B \times T \times N \times D}$ at four dyadic temporal resolutions:

\begin{equation}
\begin{aligned}
\mathbf{H}_s &= \mathcal{L}_{\theta_s}(\mathbf{X}) \in \mathbb{R}^{B \times T \times N \times h}, \\
\mathbf{H}_m &= \mathcal{U}\left(\mathcal{L}_{\theta_m}(\mathbf{X}^{(2)})\right) \in \mathbb{R}^{B \times T \times N \times h}, \\
\mathbf{H}_l &= \mathcal{U}\left(\mathcal{L}_{\theta_l}(\mathbf{X}^{(4)})\right) \in \mathbb{R}^{B \times T \times N \times h}, \\
\mathbf{H}_v &= \mathcal{U}\left(\mathcal{L}_{\theta_v}(\mathbf{X}^{(8)})\right) \in \mathbb{R}^{B \times T \times N \times h}.
\end{aligned}
\end{equation}
where $\mathbf{X}^{(k)} = \{x_t\}_{t=0,k,2k,\ldots}$ denotes temporal downsampling by factor $k$, and $\mathcal{U}$ represents a differentiable cubic spline interpolation operator that restores original temporal resolution through learnable knot parameters.

\subsubsection{Multi-Phase Conditioned Multi-Head Attention Mechanism}
To integrate information across temporal scales while maintaining multi-phase consistency, we introduce a novel conditioned attention mechanism that dynamically modulates attention patterns based on multi-phase dynamic features. Given concatenated multi-scale features $\mathbf{H} \in \mathbb{R}^{B \times T \times N \times H_{\text{total}}}$ and phase features $\mathbf{F}_{\text{phase}} = [\mathbf{F}_{\text{diff}} \oplus \mathbf{F}_{\text{sync}} \oplus \mathbf{F}_{\text{spec}}] \in \mathbb{R}^{B \times T \times N \times D_{\text{phase}}}$, the attention mechanism computes:

\begin{equation}
   \begin{aligned}
\mathcal{A}(\mathbf{H}; \mathbf{F}_{\text{phase}}) = \text{Concat}(\text{head}_1, \ldots, \text{head}_h) \mathbf{W}^O,\\
\text{head}_i = \text{Attention}(\mathbf{Q}_i, \mathbf{K}_i, \mathbf{V}_i; \mathbf{F}_{\text{phase}}) \\
\mathbf{Q}_i = \mathbf{H}\mathbf{W}_i^Q + \mathcal{P}_Q(\mathbf{F}_{\text{phase}}),
\mathbf{K}_i = \mathbf{H}\mathbf{W}_i^K + \mathcal{P}_K(\mathbf{F}_{\text{phase}}),\\
\mathbf{V}_i = \mathbf{H}\mathbf{W}_i^V + \mathcal{P}_V(\mathbf{F}_{\text{phase}}),
\end{aligned} 
\end{equation}
with the core attention function defined as:
\begin{equation}
\text{Att.}(\mathbf{Q}_i, \mathbf{K}_i, \mathbf{V}_i; \mathbf{F}_{\text{phase}}) = \text{softmax}\left(\frac{\mathbf{Q}_i \mathbf{K}_i^\top}{\sqrt{d_k}} \odot \mathbf{G}_i + \mathbf{B}_i\right) \mathbf{V}_i
\end{equation}
where $\mathbf{W}_i^Q, \mathbf{W}_i^K, \mathbf{W}_i^V \in \mathbb{R}^{H_{\text{total}} \times d_k}$ are learnable projection matrices for head $i$, $d_k = H_{\text{total}}/h$ with $h=8$ attention heads, $\mathcal{P}_{\{Q,K,V\}}$ are phase-conditioned projection functions implemented as $\mathcal{P}(\mathbf{F}) = \mathbf{W}_{\mathcal{P}}\mathbf{F} + \mathbf{b}_{\mathcal{P}}$, and $\mathbf{G}_i = \sigma(\mathbf{W}_i^G\mathbf{F}_{\text{phase}} + \mathbf{b}_i^G)$, $\mathbf{B}_i = \mathbf{W}_i^B\mathbf{F}_{\text{phase}} + \mathbf{b}_i^B$ are phase-dependent gating and bias matrices that modulate attention patterns based on current traffic regime.

\subsubsection{Multi-Phase Conditioned Transformer Encoder Architecture}
The complete transformer encoder with $L$ layers processes the multi-scale features through phase-conditioned attention and feedforward transformations. The initial layer incorporates sinusoidal positional encodings:
\begin{equation}
\mathbf{H}^{(0)} = \mathbf{W}_p[\mathbf{H}_s \oplus \mathbf{H}_m \oplus \mathbf{H}_l \oplus \mathbf{H}_v] + \mathbf{P}_{:T}
\end{equation}
where $\mathbf{P}_{:T}$ is the positional encoding matrix with elements $\mathbf{P}(t, 2i) = \sin\left(t / 10000^{2i/H_{\text{total}}}\right)$ and $\mathbf{P}(t, 2i+1) = \cos\left(t / 10000^{2i/H_{\text{total}}}\right)$ for $i = 0, 1, \dots, H_{\text{total}}/2 - 1$, $\mathbf{W}_p \in \mathbb{R}^{4h \times H_{\text{total}}}$ is a learnable projection matrix, and $\oplus$ denotes feature concatenation.

Each transformer layer $\ell = 1, \dots, L$ applies:
\begin{equation}
\begin{aligned}
\mathbf{Z}^{(\ell)} = \text{LN}\left(\mathbf{H}^{(\ell-1)} + \mathcal{A}(\mathbf{H}^{(\ell-1)}; \mathbf{F}_{\text{phase}})\right), \\
\mathbf{H}^{(\ell)} = \text{LN}\left(\mathbf{Z}^{(\ell)} + \mathcal{F}(\mathbf{Z}^{(\ell)})\right)
\end{aligned}
\end{equation}
where \(\operatorname{LN}(\cdot)\) denotes layer normalisation with \(\operatorname{LN}(\mathbf{x}) = \frac{\mathbf{x} - \mu_{\mathbf{x}}}{\sigma_{\mathbf{x}}} \odot \boldsymbol{\gamma} + \boldsymbol{\beta}\), scaling and shifting the normalised features across the last dimension using learnable parameters \(\boldsymbol{\gamma}, \boldsymbol{\beta} \in \mathbb{R}^{H_{\text{total}}}\), \(\mu_{\mathbf{x}}\) and \(\sigma_{\mathbf{x}}\) are the mean and standard deviation of \(\mathbf{x}\) computed along the last dimension, and \(\mathcal{F}\) is a feed-forward network with \(\mathcal{F}(\mathbf{x}) = \operatorname{GELU}(\mathbf{x}\mathbf{W}_1 + \mathbf{b}_1)\mathbf{W}_2 + \mathbf{b}_2\), where \(\operatorname{GELU}(x) = x \cdot \Phi(x)\) with \(\Phi(x)\) being the cumulative distribution function of the standard normal distribution.

\subsubsection{Explicit Seasonal-Trend Decomposition and Spatial Stabilisation}
To explicitly capture periodic patterns and low-frequency trends, we implement dedicated decomposition modules. The seasonal component extracts periodic patterns through dense transformations:
\begin{align}
\mathbf{F}_{\text{sea.}} &= \text{LN}\left(\mathbf{W}_{s2} \cdot \text{ReLU}\left(\mathbf{W}_{s1}\mathbf{X} + \mathbf{b}_{s1}\right)\right) \nonumber \\
&\in \mathbb{R}^{B \times T \times N \times (H_{\text{total}}/8)}
\end{align}
while the trend component captures low-frequency variations through multi-scale convolutional filters:
\begin{align}
\mathbf{F}_{\text{trend}} &= \mathcal{C}_{k=3}\left(\text{ReLU}\left(\mathcal{C}_{k=5}\left(\text{ReLU}\left(\mathcal{C}_{k=7}(\mathbf{X}^\top)\right)\right)\right)\right) \nonumber \\
&\in \mathbb{R}^{B \times (H_{\text{total}}/8) \times T \times N}
\end{align}
where $\mathcal{C}_k$ denotes 1D convolution with kernel size $k$ and padding $\lfloor k/2\rfloor$:
\begin{equation}
(\mathbf{X} \ast \mathbf{W})_t = \sum_{i=-\lfloor k/2\rfloor}^{\lfloor k/2\rfloor} \mathbf{X}_{t+i} \cdot \mathbf{W}_{i+\lfloor k/2\rfloor}
\end{equation}

For spatial feature stabilisation, we employ a residual graph convolutional network:
\begin{equation}
\begin{aligned}
\mathcal{S}(\mathbf{F}, \mathbf{A}) = \mathbf{F} + \mathcal{T}_{\text{out}}\left(\mathcal{D}_{\text{drop}}\left(\text{ReLU}\left(\mathbf{A} \mathcal{T}_{\text{linear}}(\mathbf{F}) \mathbf{W}_{\text{conv}}\right)\right)\right), \\
\mathcal{T}_{\text{linear}}(\mathbf{F}) = \mathbf{F} \mathbf{W}_{\text{linear}} + \mathbf{b}_{\text{linear}}, 
\mathcal{T}_{\text{out}}(\mathbf{X}) = \mathbf{X} \mathbf{W}_{\text{out}} + \mathbf{b}_{\text{out}}
\end{aligned}
\end{equation}
where $\mathcal{D}_{\text{drop}}(\mathbf{X}) = \mathbf{X} \odot \mathbf{M}$ with $\mathbf{M}_{i,j} \sim \text{Bernoulli}(0.9)$ implements dropout regularisation.

The final encoded representation is obtained through memory-augmented attention with phase-informed residuals:
\begin{equation}
\mathbf{H}_{\text{mem}} = \mathcal{A}(\mathbf{H}^{(L)}; \mathbf{F}_{\text{phase}}) + \mathbf{H}^{(L)} + \alpha \cdot \mathbf{F}_{\text{phase}}
\end{equation}
where $\alpha$ is a learnable scaling parameter. This multi-phase conditioned multi-scale architecture enables robust feature extraction across diverse temporal regimes while maintaining dynamic consistency, providing a powerful foundation for few-shot traffic forecasting where limited data demands maximally informative and plausible representations.

\subsection{Horizon-Specific Prediction and Objective Function}
\label{sec:horizon_prediction}
Different prediction horizons exhibit distinct temporal characteristics and uncertainty profiles. We introduce specialised prediction heads for short, medium, and long-term horizons, each with built-in uncertainty quantification that adapts based on dynamic regime characteristics.

\subsubsection{Horizon-Adaptive Network Architectures}
The fused features $\mathbf{F}_{\text{fused}}$ are processed through horizon-specific networks with varying depths to capture different temporal scales:

\begin{equation}
\begin{aligned}
\mathbf{F}_s &= \sigma_{\text{GELU}}\left(\mathbf{W}^{(s)}_{2} \mathcal{L}\left(\sigma_{\text{GELU}}(\mathbf{W}^{(s)}_{1}\mathbf{F}_{\text{fused}} + \mathbf{b}^{(s)}_{1})\right) + \mathbf{b}^{(s)}_{2}\right), \\
\mathbf{F}_m &= \sigma_{\text{GELU}}\Big(\mathbf{W}^{(m)}_{3} \mathcal{L}\big(\sigma_{\text{GELU}}(\mathbf{W}^{(m)}_{2} \mathcal{L}(\sigma_{\text{GELU}}(\mathbf{W}^{(m)}_{1}\mathbf{F}_{\text{fused}} \nonumber \\
&\quad + \mathbf{b}^{(m)}_{1})) + \mathbf{b}^{(m)}_{2})\big) + \mathbf{b}^{(m)}_{3}\Big), \\
\mathbf{F}_l &= \sigma_{\text{GELU}}\Big(\mathbf{W}^{(l)}_{3} \mathcal{L}\big(\sigma_{\text{GELU}}(\mathbf{W}^{(l)}_{2} \mathcal{L}(\sigma_{\text{GELU}}(\mathbf{W}^{(l)}_{1}\mathbf{F}_{\text{fused}} \nonumber \\
&\quad + \mathbf{b}^{(l)}_{1})) + \mathbf{b}^{(l)}_{2})\big) + \mathbf{b}^{(l)}_{3}\Big)
\end{aligned}
\end{equation}
where $\mathcal{L}(\cdot)$ denotes layer normalisation, $\sigma_{\text{GELU}}$ is the Gaussian Error Linear Unit activation, $\mathbf{W}^{(h)}_k$ and $\mathbf{b}^{(h)}_k$ are the weight matrix and bias for the $k$-th layer of horizon $h$, with $h \in \{s, m, l\}$ representing short, medium, and long-term horizons respectively. These horizon-specific architectures are designed with increasing depth (2 layers for $s$, 3 layers for $m$ and $l$) to capture progressively more complex temporal dependencies and propagation effects over longer forecasting horizons.

\subsubsection{Uncertainty-Aware Prediction Fusion}
Horizon-specific predictions are combined with dynamic-aware uncertainty estimates that reflect the predictability of different traffic regimes:
\begin{equation}
\begin{aligned}
\hat{\mathbf{Y}}_h &= \mathbf{W}_{\text{final}}^{(h)} \mathbf{F}_h + \mathbf{b}_{\text{final}}^{(h)} \in \mathbb{R}^{B \times N \times H_h} \\
\sigma_h^2 &= \text{Softplus}\left(\mathbf{W}_{u2}^{(h)}\sigma_{\text{GELU}}\left(\mathbf{W}_{u1}^{(h)}\mathbf{F}_h + \mathbf{b}_{u1}^{(h)}\right)\right) \in \mathbb{R}^{B \times N \times H_h}
\end{aligned}
\end{equation}
where $H_h$ varies by horizon (with shorter horizons typically having finer temporal resolution) and the Softplus activation ensures positive variance estimates that quantify prediction uncertainty.

\subsubsection{Objective Function}
MCPST uses a three-component loss:
\begin{itemize}
    \item \textit{Task loss} $\mathcal{L}_{\text{task}}$: Prediction accuracy with uncertainty regularisation
    \item \textit{Phase-consistency loss} $\mathcal{L}_{\text{phase}}$: Enforces valid attention weights and agreement among diffusion, synchronisation, and spectral predictions via Jensen-Shannon divergence
    \item \textit{Meta-learning loss} $\mathcal{L}_{\text{meta}}$: Enables few-shot adaptation across diverse traffic scenarios
\end{itemize}

\begin{align}
\mathcal{L}_{\text{total}} &= \mathcal{L}_{\text{task}} + \lambda_1 \mathcal{L}_{\text{phase}} + \lambda_2 \mathcal{L}_{\text{meta}} \label{eq:total_loss}\\
\mathcal{L}_{\text{task}} &= \sum_{h \in \{s,m,l\}} \| \mathbf{Y}_h - \hat{\mathbf{Y}}_h \|_2^2 + \eta \sum_{h} \| \sigma_h^2 \|_1 \label{eq:task_loss}\\
\mathcal{L}_{\text{phase}} &= \mathbb{E}\left[\left(\sum_{i=1}^{3} \alpha_i - 1\right)^2\right] + \beta \cdot \text{JS}(\mathbf{\tilde{V}}_{\text{diff}}, \mathbf{\tilde{V}}_{\text{sync}}, \mathbf{\tilde{V}}_{\text{spec}}) \label{eq:phase_loss}\\
\mathcal{L}_{\text{meta}} &= \mathbb{E}_{\mathcal{D}_{\text{meta}}}\left[ \mathcal{L}_{\text{adapt}}(\theta', \mathcal{D}_{\text{query}}) \right] \label{eq:meta_loss}
\end{align}
where $\lambda_1$, $\lambda_2$, $\eta$, and $\beta$ are balancing hyperparameters.

\subsection{Theoretical Analysis}
\label{sec:theoretical_analysis}

\begin{theorem}[\textbf{Diffusion Representation}]
\label{thm:diffusion_representation}
Let $\mathcal{G} = (\mathcal{V}, \mathcal{E})$ be a connected traffic graph with $N$ nodes and graph Laplacian $\mathbf{L} \in \mathbb{R}^{N \times N}$. Consider any Lipschitz-continuous traffic propagation pattern $g: \mathcal{V} \times [0, T] \to \mathbb{R}$ that arises as the solution of a diffusion process on $\mathcal{G}$ with smooth initial condition $u_0: \mathcal{V} \to \mathbb{R}^D$, diffusion coefficient $\kappa^* > 0$, and source term $q^*: \mathcal{V} \to \mathbb{R}$. That is, $g$ satisfies:
\[
\frac{\partial g}{\partial t} = -\kappa^* \mathbf{L} g + q^*, \quad g(\cdot, 0) = u_0.
\]
Let $\mathbf{T}^{(K)}$ denote the state after $K$ steps of the differentiable diffusion module:
\begin{align*}
Q(\mathbf{F}) &= \sigma_{\text{sigmoid}}\left(\mathbf{W}_{q2}\sigma_{\text{ReLU}}\left(\mathbf{W}_{q1}\mathbf{F} + \mathbf{b}_{q1}\right)\right), \\
\mathbf{T}^{(0)} &= \mathbf{F} \odot Q(\mathbf{F})\mathbf{1}_D^\top, \\
\mathbf{T}^{(t+1)} &= \mathbf{T}^{(t)} - \frac{\Delta t \cdot \kappa}{C} \mathbf{L} \mathbf{T}^{(t)}, \quad t=0,\dots,K-1, \\
\mathbf{\tilde{V}}_{\text{diff}} &= \mathbf{W}_{\text{flow}}\sigma_{\text{ReLU}}\left(\mathbf{W}_f\mathbf{T}^{(K)} + \mathbf{b}_f\right),
\end{align*}
with learnable parameters $\theta = \{\kappa, C, \mathbf{W}_{q1}, \mathbf{W}_{q2}, \mathbf{b}_{q1}, \mathbf{W}_f, \mathbf{b}_f, \mathbf{W}_{\text{flow}}\}$ and step size $\Delta t = T/K$.

Then, for any $\epsilon > 0$, there exist parameters $\theta$, diffusion steps $K \in \mathbb{N}$, and a constant $\alpha > 0$ such that:
\[
\mathbb{E}_{n \sim \mathcal{V}, t \sim [0,T]}\left[\left|g(n,t) - \mathbf{\tilde{V}}_{\text{diff}}(n, \lfloor t/\Delta t \rfloor)\right|\right] \leq \epsilon + \alpha \Delta t.
\]
Moreover, the approximation error decays as $O(\Delta t)$ with increasing $K$.
\end{theorem}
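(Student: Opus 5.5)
The plan is to split the error into a time-discretisation part and a representation (readout) part. Fix $\kappa/C = \kappa^*$, which is admissible whenever $\kappa^* \in [0.01/2.0,\,0.3/0.5]$. Outside that range I would absorb the mismatch into a rescaling of time, i.e.\ of $\Delta t$. The theorem's $\Delta t = T/K$ fixes the physical time between steps, so nothing else changes.

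With this choice the recursion $\mathbf{T}^{(k+1)} = (\mathbf{I} - \Delta t\,\kappa^*\mathbf{L})\mathbf{T}^{(k)}$ is exactly forward Euler for the homogeneous heat flow $\dot{\mathbf{u}} = -\kappa^*\mathbf{L}\mathbf{u}$. The first step is the standard Euler estimate. Since $\mathbf{L}$ is symmetric positive semidefinite with $\|\mathbf{L}\|_2 = \lambda_{\max}$, once $\Delta t\,\kappa^*\lambda_{\max} \le 1$ we have $\|\mathbf{I}-\Delta t\kappa^*\mathbf{L}\|_2 \le 1$. Diagonalising in the Laplacian eigenbasis, each mode then satisfies
\[
\bigl|(1-\Delta t\kappa^*\lambda_i)^k - e^{-\kappa^*\lambda_i k\Delta t}\bigr| \le C_1\,\kappa^*\lambda_i\,\Delta t
\]
uniformly in $k \le K$. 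This gives $\|\mathbf{T}^{(k)} - \mathbf{u}(k\Delta t)\| \le C_1 \kappa^* \lambda_{\max} T\,\|\mathbf{T}^{(0)}\|\,\Delta t/T$. Together with the Lipschitz continuity of $g$ in $t$, which covers the flooring $\lfloor t/\Delta t\rfloor$, this produces the $\alpha\Delta t$ term.

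The second step handles the source $q^*$, which the module does not contain. By Duhamel's formula, decompose $q^* = \bar q\,\mathbf{1} + q_\perp$ with $q_\perp \perp \mathbf{1}$, and write $\mathbf{L}^{+}$ for the pseudoinverse. Then
\[
g(\cdot,t) = e^{-\kappa^*\mathbf{L}t}u_0 + t\,\bar q\,\mathbf{1} + \frac{1}{\kappa^*}\mathbf{L}^{+}q_\perp - \frac{1}{\kappa^*}e^{-\kappa^*\mathbf{L}t}\mathbf{L}^{+}q_\perp .
\]
So $g$ is a fixed continuous function of the homogeneous heat trajectory $e^{-\kappa^*\mathbf{L}t}$ applied to finitely many initial channels, plus node-dependent constants and a term linear in $t$.

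I would choose the input $\mathbf{F}$ so that its channels contain $u_0$, $\mathbf{L}^{+}q_\perp$, a one-hot node identifier, and one channel equal to a non-constant Laplacian eigenvector $\psi_2$. The $\psi_2$ channel diffuses as $e^{-\kappa^*\lambda_2 t}\psi_2$, which is strictly monotone in $t$ at any node where $\psi_2(n) \neq 0$; this supplies a time coordinate from which the $t\bar q$ term can be reconstructed. I would take the gate $Q \approx 1$ by making $\mathbf{b}_{q1}$ large and $\mathbf{W}_{q1}=0$, and push the residual gating error into $\epsilon$. The node-identifier channels are diffused too, but remain distinguishable for $\Delta t$ small.

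The key claim is that the map $(n,t) \mapsto \mathbf{T}^{(\lfloor t/\Delta t\rfloor)}_{n,:}$ separates points of $\mathcal{V}\times[0,T]$, at least up to an $O(\Delta t)$ neighbourhood. Then $g$ factors as a continuous function $h$ on the compact image, and the one-hidden-layer ReLU readout $\mathbf{W}_{\text{flow}}\sigma_{\text{ReLU}}(\mathbf{W}_f\cdot+\mathbf{b}_f)$ approximates $h$ uniformly to $\epsilon$ by the universal approximation theorem. To make this a Lipschitz bound in the state, so that the Euler error transfers, I would approximate a Lipschitz extension of $h$ rather than $h$ itself. Combining the readout error $\epsilon$, the Lipschitz constant times the Euler error, and the floor error gives the displayed bound with $\alpha$ independent of $K$.

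I expect the main obstacle to be the injectivity/separation step. If $\psi_2(n)=0$ at some node, time is invisible there, and the node-identifier channels blur under diffusion. My first attempt would use several eigenvector channels with distinct eigenvalues, so that the exponents $e^{-\kappa^*\lambda_i t}$ jointly determine $t$ at every node (by connectivity some eigenvector is nonzero at each node). If that fails, I would fall back to assuming $\bar q = 0$, or allowing $\mathbf{F}$ to carry an explicit time channel, and state the result under that assumption.
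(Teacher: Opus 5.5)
Your route is genuinely different from the paper's, and it is sound once the step you flag is closed (see below).

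The paper works mode by mode in the eigenbasis of $\mathbf{L}$ and treats the module as forward Euler for the full inhomogeneous equation. It writes the discrete update as $b_i^{k+1}=(1-\frac{\Delta t\,\kappa}{C}\lambda_i)b_i^k+\Delta t\,q_i^\theta$, with $q_i^\theta$ the $i$-th Laplacian coefficient of $Q(\mathbf{F})$. It then bounds the per-mode error by Taylor expansion and a discrete Gronwall argument and sums via Parseval. Universal approximation supplies $\mathbf{T}^{(0)}\approx u_0$ and $Q(\mathbf{F})\approx q^*$, and a final universal-approximation step handles the readout. In that picture the state itself tracks $g$, and the constants are explicit in the $\lambda_i$.

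However, the module as defined has no per-step source: $Q(\mathbf{F})\in(0,1)$ only gates $\mathbf{T}^{(0)}$. So the recursion analysed there is not the one the module computes. The paper's closed form $\frac{C^*}{\kappa^*\lambda_i}(1-e^{-\kappa^*\lambda_i t/C^*})q_i$ also breaks down at $\lambda_1=0$. You avoid both problems. You keep the module source-free, treat the zero mode correctly through $\bar q\,\mathbf{1}$ and $\mathbf{L}^{+}$, and shift the reconstruction of $g$ onto the shared per-node readout via an injective encoding of $(n,t)$.

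Your route costs two things. First, $\alpha$ now depends on a Lipschitz constant of $h$ rather than directly on the spectrum. Second, you need a strong, explicit choice of input $\mathbf{F}$ with $D\ge N$. The theorem never quantifies $\mathbf{F}$, and the paper tacitly needs similar freedom to make $\mathbf{F}\odot Q(\mathbf{F})\approx u_0$. You should still state this as an assumption.

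The separation step is not an obstacle, because the one-hot block you already include settles it; the fallbacks are unnecessary. With $\mathbf{W}_{q1}=0$ the gate is a constant $q_0\in(0,1)$, which the readout absorbs exactly, so there is no gating error at all. Let $\psi_1=\mathbf{1}/\sqrt{N},\psi_2,\dots,\psi_N$ be orthonormal eigenvectors of $\mathbf{L}$. Under the continuous flow, node $n$'s identifier block is $q_0\,e_n^\top e^{-\kappa^*\mathbf{L}t}$.

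\emph{Injectivity.} Suppose $e^{-\kappa^*\mathbf{L}t}e_n=e^{-\kappa^*\mathbf{L}t'}e_{n'}$ with $t'\ge t$. Then $e_n=e^{-\kappa^*\mathbf{L}(t'-t)}e_{n'}$. For $t'>t$ the right side has norm strictly less than $1$, since $\lambda_i>0$ for $i\ge 2$ and $\sum_{i\ge 2}\psi_i(n')^2=1-1/N>0$. Hence $t'=t$, and then $n=n'$.

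\emph{Lipschitz bound.} On $[0,T]$ the diagonal entry $\sum_i e^{-\kappa^*\lambda_i t}\psi_i(n)^2$ has $t$-derivative at most $-\kappa^*\lambda_2e^{-\kappa^*\lambda_N T}(1-1/N)$ at every node. Distinct nodes also have disjoint compact images. Therefore $h=g\circ\Phi^{-1}$, with $\Phi(n,t)$ node $n$'s continuous-flow state, is Lipschitz on the image, which is what your extension argument needs. Mere continuity of $h$ would also give the stated form, because a finite ReLU readout is itself Lipschitz, but then $\alpha$ depends on $\epsilon$.

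\emph{Consequence.} This makes the Duhamel decomposition superfluous, along with the channels $u_0$, $\mathbf{L}^{+}q_\perp$, $\psi_2$ and even the matching $\kappa/C=\kappa^*$. Any fixed $\kappa/C>0$ (with $K$ large enough for stability) yields the same encoding, and every $g$ that is Lipschitz in $t$ becomes representable. So the diffusion structure of $g$ plays no role in the statement as posed. In particular, drop the clipping remark. Rescaling $\Delta t$ is not available anyway, since $\Delta t=T/K$ is pinned by the index $\lfloor t/\Delta t\rfloor$, and the theorem places no bounds on $\kappa$ or $C$.
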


\begin{proof}[\textbf{Proof Sketch}]
We provide a rigorous mathematical proof establishing the error bounds and convergence properties.

In \textit{Spectral Analysis of Continuous and Discrete Systems};
Let $\{\lambda_i, \phi_i\}_{i=1}^N$ be the eigenpairs of $\mathbf{L}$ with $0 = \lambda_1 \leq \lambda_2 \leq \cdots \leq \lambda_N$, and $\phi_i$ orthonormal. Expand $g(v,t)$ and $q^*(v)$ in this basis:
\begin{align*}
g(v,t) &= \sum_{i=1}^N a_i(t) \phi_i(v), \quad a_i(t) = \langle g(\cdot,t), \phi_i \rangle, \\
q^*(v) &= \sum_{i=1}^N q_i \phi_i(v), \quad q_i = \langle q^*, \phi_i \rangle.
\end{align*}
The continuous dynamics become:
\[
\frac{da_i}{dt} = -\frac{\kappa^*}{C^*} \lambda_i a_i + q_i, \quad a_i(0) = a_i^0 = \langle g_0, \phi_i \rangle.
\]
The exact solution is:
\[
a_i(t) = e^{-\frac{\kappa^*}{C^*}\lambda_i t} a_i^0 + \frac{C^*}{\kappa^*\lambda_i}(1 - e^{-\frac{\kappa^*}{C^*}\lambda_i t}) q_i.
\]
For the discrete system, let $b_i^k = \langle \mathbf{T}^{(k)}, \phi_i \rangle$ and $q_i^\theta = \langle Q_{\theta_q}(\mathbf{F}), \phi_i \rangle$. The update becomes:
\[
b_i^{k+1} = \left(1 - \frac{\Delta t \kappa}{C}\lambda_i\right) b_i^k + \Delta t q_i^\theta.
\]

To evaluate the \textit{Error Propagation Analysis}, we define the per-mode error $e_i^k = a_i(k\Delta t) - b_i^k$. From the recurrence:
\begin{align*}
e_i^{k+1} &= e^{-\frac{\kappa^*}{C^*}\lambda_i \Delta t} a_i(k\Delta t) + \frac{C^*}{\kappa^*\lambda_i}(1 - e^{-\frac{\kappa^*}{C^*}\lambda_i \Delta t}) q_i \\
&\quad - \left(1 - \frac{\Delta t \kappa^*}{C^*}\lambda_i\right) b_i^k - \Delta t q_i^\theta.
\end{align*}
Using Taylor expansions:
\begin{equation*}
    e^{-\frac{\kappa^*}{C^*}\lambda_i \Delta t} = 1 - \frac{\kappa^*}{C^*}\lambda_i \Delta t + \frac{1}{2}\left(\frac{\kappa^*}{C^*}\lambda_i\right)^2 \Delta t^2 + O(\Delta t^3)
\end{equation*}
and substituting:
\begin{align*}
e_i^{k+1} &= \left(1 - \frac{\Delta t \kappa^*}{C^*}\lambda_i\right) e_i^k \\
&\quad + \left[\frac{1}{2}\left(\frac{\kappa^*}{C^*}\lambda_i\right)^2 \Delta t^2 a_i(k\Delta t) - \frac{1}{2}\frac{\kappa^*}{C^*}\lambda_i \Delta t^2 q_i\right] \\
&\quad + \Delta t (q_i - q_i^\theta) + O(\Delta t^3).
\end{align*}

The \textit{Discrete Gronwall Inequality}, we define $\alpha_i = 1 - \frac{\Delta t \kappa^*}{C^*}\lambda_i$. Under the stability condition $\Delta t \kappa^* \lambda_{\max}(\mathbf{L}) < 2$, we have $|\alpha_i| \leq 1$ for all $i$. Unrolling the recurrence and using $M_i = \sup_{t \in [0,T]} |a_i(t)|$:
\begin{align*}
|e_i^k| &\leq e^{-\frac{\kappa^*}{C^*}\lambda_i k\Delta t} |e_i^0| + C_1 \Delta t \lambda_i M_i \\
&\quad + \frac{C^*}{\kappa^*\lambda_i} |q_i - q_i^\theta| + C_2 \Delta t^2.
\end{align*}

For  $L^2$ Error Bound and by summing over all modes using Parseval's identity:
\begin{align*}
\|g(\cdot,k\Delta t) - \mathbf{T}^{(k)}\|_{L^2}^2 &\leq \\3\sum_{i=1}^N \left[e^{-\alpha_i k\Delta t} |e_i^0|^2 + \beta_i \Delta t^2 + \gamma_i |q_i - q_i^\theta|^2\right] 
&\quad + C_3\Delta t^4
\end{align*}
where $\alpha_i = 2\frac{\kappa^*}{C^*}\lambda_i$, $\beta_i = C_1^2 \lambda_i^2 M_i^2$, and $\gamma_i = \frac{(C^*)^2}{\kappa^{*2}\lambda_i^2}$.

By the universal approximation theorem, there exist parameters such that $\|g_0 - \mathbf{T}^{(0)}\|_{L^2} \leq \delta$ and $\|q^* - Q_{\theta_q}(\mathbf{F})\|_{L^2} \leq \epsilon/\lambda_2$. Combining all error sources:
\begin{align*}
\|g(\cdot,k\Delta t) - \mathbf{T}^{(k)}\|_{L^2} &\leq C_1 \Delta t + \frac{C_2 \epsilon}{\lambda_2} \\
&\quad + C_3 e^{-\frac{\kappa^*}{C^*}\lambda_2 k\Delta t} \delta + C_4 \Delta t^2.
\end{align*}
The readout network approximation completes the proof by universal approximation for ReLU networks.
\end{proof}

\begin{theorem}[\textbf{Synchronisation Representation}]
\label{thm:sync_representation}
Let $\mathcal{G} = (\mathcal{V}, \mathcal{E})$ be a connected traffic graph with $N$ nodes and adjacency matrix $\mathbf{A} \in \mathbb{R}^{N \times N}$. Consider any Lipschitz-continuous phase function $\phi^*: \mathcal{V} \times [0, T] \to [0, 2\pi)$ that arises as the solution of the Kuramoto model on $\mathcal{G}$ with intrinsic frequencies $\boldsymbol{\nu}^*: \mathcal{V} \to \mathbb{R}$, coupling strength $\gamma^* > 0$, and initial phases $\phi_0: \mathcal{V} \to [0, 2\pi)$. That is, $\phi^*$ satisfies:
\[
\frac{d \phi_k^*}{dt} = \nu_k^* + \gamma^* \sum_{j=1}^{N} A_{kj} \sin(\phi_j^*(t) - \phi_k^*(t)), \quad \phi_k^*(0) = \phi_0(k).
\]
Let $\boldsymbol{\phi}^{(K)}$ denote the phase after $K$ steps of the differentiable synchronisation module:
\begin{align*}
\nu(\mathbf{F}) &= \mathbf{W}_{\nu2}\tanh\left(\mathbf{W}_{\nu1}\mathbf{F} + \mathbf{b}_{\nu1}\right), \\
\gamma_{\text{local}}(\mathbf{F}) &= \sigma\left(\mathbf{W}_{\gamma2}\text{ReLU}\left(\mathbf{W}_{\gamma1}\mathbf{F} + \mathbf{b}_{\gamma1}\right)\right), \\
\phi_k^{(0)} &= \arctan2\left(\|\mathbf{f}_k\|_2, \sum_{d=1}^{D} f_{k,d}\right), \\
\phi_k^{(t+1)} &= \phi_k^{(t)} + \Delta t\left[\nu_k + \mathcal{C}_k^{(t)}\right] \mod 2\pi, \quad t=0,\ldots,K-1, \\
\mathbf{\tilde{V}}_{\text{sync}} &= \mathbf{W}_{\text{phase}}\text{ReLU}\left(\mathbf{W}_p \mathbf{F}_{\text{aug}} + \mathbf{b}_p\right)
\end{align*}
where $\mathcal{C}_k^{(t)} = \gamma_{\text{global}} \gamma_{\text{local},k} \sum_{j=1}^{N} A_{kj}\sin(\phi_j^{(t)} - \phi_k^{(t)})$ and $\mathbf{F}_{\text{aug}} = [\mathbf{F}; \cos(\boldsymbol{\phi}^{(K)}); \sin(\boldsymbol{\phi}^{(K)})]$, with learnable parameters 
\begin{align*}
\theta = \{\mathbf{W}_{\nu1}, \mathbf{W}_{\nu2}, \mathbf{b}_{\nu1}, \mathbf{W}_{\gamma1}, \mathbf{W}_{\gamma2}, \mathbf{b}_{\gamma1}, \gamma_{\text{global}}, \mathbf{W}_p, \mathbf{b}_p, \mathbf{W}_{\text{phase}}\}
\end{align*}
and step size $\Delta t = T/K$.

Then, for any $\epsilon > 0$, there exist parameters $\theta$, synchronisation steps $K \in \mathbb{N}$, and a constant $\alpha > 0$ such that:
\[
\mathbb{E}_{n \sim \mathcal{V}, t \sim [0,T]}\left[\left|\phi^*(n,t) - \boldsymbol{\phi}^{(K)}(n, \lfloor t/\Delta t \rfloor)\right|\right] \leq \epsilon + \alpha \Delta t.
\]
Moreover, the approximation error decays as $O(\Delta t)$ with increasing $K$.
\end{theorem}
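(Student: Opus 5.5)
The plan is to mirror the proof of Theorem~\ref{thm:diffusion_representation}, but to work directly with the phase vector rather than a spectral expansion, since the Kuramoto dynamics are nonlinear. Write the right-hand side as $\Phi^*(\boldsymbol{\phi}) = \boldsymbol{\nu}^* + \gamma^* S(\boldsymbol{\phi})$, where $S_k(\boldsymbol{\phi}) = \sum_j A_{kj}\sin(\phi_j-\phi_k)$. The discrete module is a forward Euler scheme for the field $\Phi^\theta(\boldsymbol{\phi}) = \boldsymbol{\nu} + \gamma_{\text{global}}\,\mathrm{diag}(\gamma_{\text{local}})\,S(\boldsymbol{\phi})$.

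Two preliminary issues come first.
\begin{itemize}
\item The modulo operation must be lifted. Work on the universal cover $\mathbb{R}^N$ and compare unwrapped phases. Because $S$ is $2\pi$-periodic in every coordinate, the wrapped and unwrapped iterates differ only by multiples of $2\pi$. The error is then measured in the circle distance $d(x,y)=\min_m|x-y-2\pi m|$, which is at most $|x-y|$ for the lifts. I would note that the statement's $|\cdot|$ should be read this way, or else restricted to times before any wrap occurs.
\item $S$ is globally Lipschitz, with constant $L_S \le 2\|\mathbf{A}\|_\infty$ (row sums of $\mathbf{A}$), because $|\partial_{\phi_m} S_k| \le \sum_j A_{kj}$ and $\sin$ is $1$-Lipschitz. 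The second derivative of the true trajectory is also bounded: $\ddot{\phi}^* = DS\cdot\dot{\phi}^*\,\gamma^*$, so $\|\ddot\phi^*\|_\infty \le \gamma^* L_S(\|\boldsymbol\nu^*\|_\infty+\gamma^*\|\mathbf A\|_\infty)=:M_2$.
\end{itemize}

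Next comes parameter matching, which I expect to be the main obstacle. The goal is to make $\nu_k$, $\gamma_{\text{global}}\gamma_{\text{local},k}$ and $\phi_k^{(0)}$ close to $\nu_k^*$, $\gamma^*$ and $\phi_0(k)$.
\begin{itemize}
\item Frequencies: the node features $\mathbf{f}_k$ are distinct inputs on a finite node set, so a one-hidden-layer $\tanh$ network can interpolate any finite target set. Choosing $\mathbf W_{\nu1}, \mathbf W_{\nu2}$ gives $|\nu_k-\nu_k^*|\le\delta_\nu$.
\item Coupling: the sigmoid output lies in $(0,1)$ and $\gamma_{\text{global}}\in[0.1,1]$, so only $\gamma^*<1$ is reachable directly. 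I would state this as an admissibility assumption, or absorb the excess by rescaling time (equivalently $\Delta t$). Given that, choose $\gamma_{\text{global}}=\gamma^*$ (when $\gamma^* \in [0.1,1)$) and drive the sigmoid to $1$ with large biases, obtaining $|\gamma_{\text{global}}\gamma_{\text{local},k}-\gamma^*|\le\delta_\gamma$. If needed, set $\gamma_{\text{global}}=1$ and fit $\gamma_{\text{local}}\equiv\gamma^*$ instead.
\item Initial phases: $\phi^{(0)}_k$ is a fixed function of $\mathbf f_k$ with range in $[0,\pi]$, because its first argument $\|\mathbf f_k\|_2$ is nonnegative. So either I assume $\mathbf F$ is an encoding in which $\phi^{(0)}=\phi_0$ up to $\delta_0$, or I include the upstream encoder among the parameters. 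In the latter case the lift from $[0,\pi]$ to arbitrary initial phases on the circle is a genuine gap to flag.
\end{itemize}
These three choices bound the field discrepancy by $\|\Phi^\theta-\Phi^*\|_\infty\le\delta_\nu+\delta_\gamma\|\mathbf A\|_\infty=:\delta_F$ uniformly in $\boldsymbol\phi$.

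With matching in place, the error recursion is the standard Euler argument. Let $e^k=\phi^*(k\Delta t)-\boldsymbol\phi^{(k)}$ for the lifts. Taylor's theorem gives
\[
e^{k+1}=e^k+\Delta t\bigl[\Phi^*(\phi^*(k\Delta t))-\Phi^\theta(\boldsymbol\phi^{(k)})\bigr]+\tau^k,\qquad \|\tau^k\|_\infty\le \tfrac12 M_2\Delta t^2 .
\]
Splitting the bracket as $[\Phi^*(\phi^*)-\Phi^*(\boldsymbol\phi^{(k)})]+[\Phi^*-\Phi^\theta](\boldsymbol\phi^{(k)})$ yields
\[
\|e^{k+1}\|\le(1+\gamma^*L_S\Delta t)\|e^k\|+\Delta t\,\delta_F+\tfrac12M_2\Delta t^2 .
\]
The discrete Gronwall inequality then gives
\[
\|e^k\|_\infty\le e^{\gamma^*L_ST}\delta_0+\frac{e^{\gamma^*L_ST}-1}{\gamma^*L_S}\Bigl(\delta_F+\tfrac12M_2\Delta t\Bigr).
\]
To reach continuous times, use the Lipschitz continuity of $\phi^*$ with constant $\|\boldsymbol\nu^*\|_\infty+\gamma^*\|\mathbf A\|_\infty$. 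This gives $|\phi^*(n,t)-\phi^*(n,\lfloor t/\Delta t\rfloor\Delta t)|\le(\|\boldsymbol\nu^*\|_\infty+\gamma^*\|\mathbf A\|_\infty)\Delta t$.

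Finally, take $\delta_0,\delta_F$ small enough that the first two contributions total at most $\epsilon$. Collect all $\Delta t$-proportional terms into
\[
\alpha=\frac{e^{\gamma^*L_ST}-1}{2\gamma^*L_S}M_2+\|\boldsymbol\nu^*\|_\infty+\gamma^*\|\mathbf A\|_\infty .
\]
Averaging the resulting pointwise bound over $n$ and $t$ gives the claimed expectation bound $\epsilon+\alpha\Delta t$, with $O(\Delta t)=O(T/K)$ decay. The readout $\mathbf W_{\text{phase}}$ plays no role in this statement, since the bound concerns $\boldsymbol\phi^{(K)}$ itself.
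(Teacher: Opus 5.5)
Your proposal follows the same route as the paper's sketch: a Lipschitz bound on the Kuramoto field, a forward-Euler error recursion carrying the frequency and coupling mismatch terms, the discrete Gronwall inequality, and approximation arguments to shrink those mismatches. It is more careful than the paper on the $O(\Delta t^2)$ per-step truncation error, on lifting through the modulo-$2\pi$ wrap (with circle distance), and on the fact that no parameter in $\theta$ controls $\boldsymbol{\phi}^{(0)}$, points the paper covers only by a blanket appeal to universal approximation. One small correction: if the clip $\gamma_{\text{global}}\in[0.1,1]$ is in force, your ``rescale time'' alternative for $\gamma^*\ge 1$ is not available, because $\Delta t=T/K$ is fixed and the state-dependent mismatch $(\gamma^*-\gamma_{\text{global}}\gamma_{\text{local},k})S_k(\boldsymbol{\phi})$ cannot be absorbed by the constant $\nu_k$; the admissibility assumption is the right fix, and without the clip $\gamma_{\text{global}}$ alone reaches any $\gamma^*$.
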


\begin{proof}[\textbf{Proof Sketch}]
The continuous synchronisation model is a system of coupled nonlinear ODEs. Under the assumption of Lipschitz-continuous $\phi^*$, the right-hand side is Lipschitz in $\phi$ with constant $L = \gamma^* \max_k \sum_j A_{kj}$.

The \textit{Discrete Approximation and Local Error} and the forward Euler discretization gives:
\begin{align*}
\phi_k^{(t+1)} = \phi_k^{(t)} + \Delta t\left[\nu_k + \gamma_{\text{global}} \gamma_{\text{local},k} \sum_j A_{kj} \sin(\phi_j^{(t)} - \phi_k^{(t)})\right].
\end{align*}
The local truncation error satisfies $|\tau_k^{(t)}| \leq C_1 \Delta t$.

For the \textit{Error Propagation}, we define $e_k^{(t)} = \phi_k^{*,t} - \phi_k^{(t)}$. Using the Lipschitz property and triangle inequality:
\begin{align*}
|e_k^{(t+1)}| &\leq |e_k^{(t)}| + \Delta t\Big[|\nu_k^* - \nu_k| + \gamma^* \sum_j A_{kj} (|e_j^{(t)}| + |e_k^{(t)}|) \\
&\quad + |\gamma^* - \gamma_{\text{global}} \gamma_{\text{local},k}| \sum_j A_{kj}\Big] + \Delta t C_1
\end{align*}

The \textit{Global Error Bound} is obtained by
combining the above, for $t\Delta t \leq T$:
\[
\|\mathbf{e}^{(t)}\|_\infty \leq e^{T L} \delta + \frac{e^{T L} - 1}{L} (\delta + \delta \max_k \sum_j A_{kj} + C_1 \Delta t).
\]
Thus, the global error is $O(\delta + \Delta t)$. Setting $\delta = \epsilon/2$ and choosing $K$ large enough so that $\Delta t = T/K \leq \epsilon/(2C_2)$ for appropriate $C_2$, we achieve the bound.
By universal approximation, the parameter approximation errors can be made arbitrarily small, yielding the desired bound.
\end{proof}

\begin{theorem}[\textbf{Spectral Representation}]
\label{thm:spectral_representation}
Let $\mathcal{G} = (\mathcal{V}, \mathcal{E})$ be a connected traffic graph with $N$ nodes, degree matrix $\mathbf{D}$, and normalised graph Laplacian $\mathbf{L}_{\text{norm}} = \mathbf{I} - \mathbf{D}^{-1/2}\mathbf{A}\mathbf{D}^{-1/2}$ with eigenvalues $0 = \lambda_1 \leq \lambda_2 \leq \dots \leq \lambda_N \leq 2$ and eigenvectors $\{\psi_i\}_{i=1}^N$. Consider any smooth graph signal $\mathbf{s}^*: \mathcal{V} \to \mathbb{R}^D$ satisfying $\|\mathbf{L}_{\text{norm}} \mathbf{s}^*\|_2 \leq M$.

Let $\mathbf{F}_{\text{spec}}$ denote the spectral features computed by the differentiable spectral embedding module:
\begin{align*}
\mathbf{L}_{\text{norm}} &= \mathbf{I} - \mathbf{D}^{-1/2}\mathbf{A}\mathbf{D}^{-1/2}, \\
\mathbf{F}_{\text{spec}} &= \mathbf{W}_{s2}\sigma_{\text{ReLU}}\left(\mathbf{W}_{s1}\mathbf{\Psi}_{:,:K} + \mathbf{b}_{s1}\right), \\
g &= \lambda_2 - \lambda_1, \\
\mathbf{\tilde{V}}_{\text{spec}} &= \mathbf{W}_{\text{spec-flow}}\sigma_{\text{ReLU}}\left([\mathbf{\Psi}_{:,:K}; g\mathbf{1}_N]\right),
\end{align*}
with learnable parameters $\theta = \{\mathbf{W}_{s1}, \mathbf{W}_{s2}, \mathbf{b}_{s1}, \mathbf{W}_{\text{spec-flow}}\}$, where $\mathbf{\Psi}_{:,:K}$ contains the first $K$ eigenvectors.

Then, for any $\epsilon > 0$, there exist parameters $\theta$, spectral dimension $K \in \mathbb{N}$, and a constant $C > 0$ such that:
\[
\frac{1}{N}\sum_{n=1}^N \left\|\mathbf{s}^*(n) - \mathbf{\tilde{V}}_{\text{spec}}(n)\right\|_2 \leq C \cdot \lambda_{K+1} + \epsilon,
\]
where $\lambda_{K+1}$ is the $(K+1)$-th eigenvalue. The approximation error decays as $O(\lambda_{K+1})$ with increasing $K$.
\end{theorem}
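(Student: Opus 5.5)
The plan is to split $\mathbf{s}^*$ into a low-frequency part, which the module represents with error at most $\epsilon$, and a high-frequency tail controlled by $\|\mathbf{L}_{\text{norm}}\mathbf{s}^*\|_2 \le M$. Because $\mathbf{L}_{\text{norm}}$ is symmetric, $\{\psi_i\}_{i=1}^N$ is an orthonormal basis of $\mathbb{R}^N$. Expanding each output channel $d=1,\dots,D$ gives
\begin{equation*}
\mathbf{s}^*_{\cdot,d} = \sum_{i=1}^N c_{i,d}\psi_i, \qquad \mathbf{s}^*_{\le K} := \mathbf{\Psi}_{:,:K}\mathbf{c}_{:K,:}, \qquad \mathbf{s}^*_{>K} := \mathbf{s}^* - \mathbf{s}^*_{\le K}.
\end{equation*}
Parseval gives $\|\mathbf{L}_{\text{norm}}\mathbf{s}^*\|_F^2 = \sum_i \lambda_i^2\|\mathbf{c}_{i,:}\|_2^2 \ge \lambda_{K+1}^2\|\mathbf{s}^*_{>K}\|_F^2$. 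Hence, when $\lambda_{K+1}>0$ (which holds for $K\ge 1$ since $\mathcal{G}$ is connected),
\begin{equation*}
\|\mathbf{s}^*_{>K}\|_F \le M/\lambda_{K+1}.
\end{equation*}
Cauchy--Schwarz then converts this to the node-averaged form $\frac{1}{N}\sum_n\|\mathbf{s}^*_{>K}(n)\|_2 \le \frac{1}{\sqrt N}\|\mathbf{s}^*_{>K}\|_F$. These are the same Parseval and mode-truncation tools used in the proof of Theorem~\ref{thm:diffusion_representation}.

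For the low-frequency part, note that $\mathbf{s}^*_{\le K}(n)$ is a linear function of the $n$-th row $\mathbf{\Psi}_{n,:K}$. So it suffices for the readout to realise any function of those rows on the finite set of $N$ nodes. The readout as literally written, $\mathbf{W}_{\text{spec-flow}}\sigma_{\text{ReLU}}([\mathbf{\Psi}_{:,:K}; g\mathbf{1}_N])$, has no bias and applies ReLU directly to the eigenvector entries, so it discards their negative parts. I would therefore read the spectral prediction as the composition of $\mathbf{W}_{\text{spec-flow}}$ with the hidden layer $\sigma_{\text{ReLU}}(\mathbf{W}_{s1}\mathbf{\Psi}_{:,:K}+\mathbf{b}_{s1})$ that feeds $\mathbf{F}_{\text{spec}}$, with $\mathbf{W}_{s2}$ absorbed into the next linear map. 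The constant channel $g\mathbf{1}_N$, with $g=\lambda_2>0$, supplies an output bias. Two routes then give the low-frequency approximation:
\begin{itemize}
\item the identity $x=\sigma_{\text{ReLU}}(x)-\sigma_{\text{ReLU}}(-x)$, realised by choosing $\mathbf{W}_{s1}=[\mathbf{I};-\mathbf{I}]$ and $\mathbf{b}_{s1}=0$, reproduces $\mathbf{s}^*_{\le K}$ exactly;
\item alternatively, when the rows $\mathbf{\Psi}_{n,:K}$ are distinct, finite-sample interpolation by one-hidden-layer ReLU networks reproduces $\mathbf{s}^*$ itself on $\mathcal{V}$.
\end{itemize}
In either case the low-frequency error is at most $\epsilon$ (in fact zero).

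The triangle inequality then gives
\begin{equation*}
\frac1N\sum_n\|\mathbf{s}^*(n)-\mathbf{\tilde{V}}_{\text{spec}}(n)\|_2 \le \epsilon + \frac{M}{\sqrt N\,\lambda_{K+1}}.
\end{equation*}
The main obstacle is matching this to the stated form $C\lambda_{K+1}$. The natural Laplacian-smoothness bound scales as $M/\lambda_{K+1}$, and since the eigenvalues increase with $K$ this bound shrinks as $K$ grows. By contrast, $C\lambda_{K+1}$ with $C$ independent of $K$ cannot shrink with $K$. The best I can do is take $C := M/(\sqrt N\,\lambda_{K+1}^2)$, which makes the inequality hold as stated because $K$ and $C$ are existentially quantified. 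That choice makes $C$ depend on $K$, so the claimed ``$O(\lambda_{K+1})$ decay'' does not follow and should be read as $O(1/\lambda_{K+1})$. I would try this first and note the discrepancy.

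A secondary point is that the interpolation route needs distinct rows $\mathbf{\Psi}_{n,:K}$. If distinctness fails, I would fall back on the exact linear reconstruction of $\mathbf{s}^*_{\le K}$ from the first route, which needs no such assumption.
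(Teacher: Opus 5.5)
Your proposal is correct and follows the paper's route: eigen-expansion of $\mathbf{s}^*$, the Parseval tail bound $M/\lambda_{K+1}$, a network reproducing the first $K$ modes, and the triangle inequality. The paper's last step $M/\lambda_{K+1}+\delta_1+\delta_2+\delta_3\le C\lambda_{K+1}+\epsilon$ silently uses the same $K$-dependent choice of $C$ that you make explicit. Your version is tighter than the sketch in several places: the exact identity $x=\sigma_{\text{ReLU}}(x)-\sigma_{\text{ReLU}}(-x)$ replaces the paper's appeal to universal approximation for what is a linear map, and Cauchy--Schwarz matches the node-averaged left-hand side. You also correctly flag the $\mathbf{F}_{\text{spec}}$ versus $\tilde{\mathbf{V}}_{\text{spec}}$ mismatch, which the paper hides in an undefined $\delta_3$, and that the decay rate actually supported is $O(1/\lambda_{K+1})$.
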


\begin{proof}[\textbf{Proof Sketch}]
The \textit{Spectral Decomposition} is obtained by
expanding $\mathbf{s}^*(n) = \sum_{i=1}^N c_i \psi_i(n)$. The smoothness condition implies $\sum_{i=1}^N \lambda_i^2 \|c_i\|_2^2 \leq M^2$.

For \textit{Truncation Error}, we assume by $\mathbf{s}_K^*(n) = \sum_{i=1}^K c_i \psi_i(n)$. The truncation error satisfies:
\[
\|\mathbf{s}^* - \mathbf{s}_K^*\|_2 \leq \frac{M}{\lambda_{K+1}}.
\]

The \textit{Neural Network Approximation} is obtained by universal approximation, there exists $\mathbf{W}^*$ such that $\|\mathbf{s}_K^* - \mathbf{W}^* \mathbf{\Psi}_{:,:K}\|_2 \leq \delta_1$ and neural network parameters such that $\|\mathbf{W}^* \mathbf{\Psi}_{:,:K} - \mathbf{F}_{\text{spec}}\|_2 \leq \delta_2$.

By combining error bound using triangle inequality:
\[
\|\mathbf{s}^* - \mathbf{\tilde{V}}_{\text{spec}}\|_2 \leq \frac{M}{\lambda_{K+1}} + \delta_1 + \delta_2 + \delta_3 \leq C \cdot \lambda_{K+1} + \epsilon.
\]
\end{proof}

\begin{theorem}[\textbf{Multi-Phase Consensus Fusion}]
\label{thm:consensus_fusion}
Let $\mathcal{G} = (\mathcal{V}, \mathcal{E})$ be a connected traffic graph with $N$ nodes, and let $\mathbf{\tilde{V}}_{\text{diff}}, \mathbf{\tilde{V}}_{\text{sync}}, \mathbf{\tilde{V}}_{\text{spec}} \in \mathbb{R}^{B \times N \times H}$ denote the phase-specific predictions from the diffusion, synchronisation, and spectral modules respectively. Consider any optimal consensus function $f^*: \mathbb{R}^{3H} \to \mathbb{R}^H$ that adaptively combines phase predictions to minimise prediction error under varying traffic regimes, characterised by regime indicator $\rho: \mathcal{V} \times [0,T] \to [0,1]^3$ where $\sum_{i=1}^3 \rho_i = 1$.

Let the adaptive multi-phase consensus fusion module be defined as described in Section~\ref{sec:consensus_fusion} with learnable parameters $\theta = \{\mathbf{W}_{\alpha1}, \mathbf{W}_{\alpha2}, \mathbf{b}_{\alpha1}, \mathbf{W}_{\text{fuse1}}, \mathbf{W}_{\text{fuse2}}, \mathbf{b}_{\text{fuse1}}\}$.

Then, for any $\epsilon > 0$, there exist parameters $\theta$, regularisation weight $\beta > 0$, and constants $C_1, C_2 > 0$ such that:
\[
\mathbb{E}_{n,t}\left[\left\|f^*(n,t) - \mathbf{F}_{\text{fused}}(n,t)\right\|_2\right] \leq \epsilon + C_1 \beta + C_2 \cdot \text{JS}(\boldsymbol{\rho}, \boldsymbol{\alpha}),
\]
where $\text{JS}(\boldsymbol{\rho}, \boldsymbol{\alpha})$ measures the divergence between optimal and learned attention weights. Moreover, the fusion mechanism preserves consistency: $\|\sum_{i=1}^3 \alpha_i - 1\|_2 \leq \delta$ for arbitrarily small $\delta > 0$.
\end{theorem}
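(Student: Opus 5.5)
The plan is a three-term triangle-inequality decomposition. First, model the optimal consensus as regime-weighted, $f^*(n,t)=\Phi^*\bigl(\sum_{i=1}^3\rho_i(n,t)\mathbf{F}^{(i)}(n,t)\bigr)$ for a continuous map $\Phi^*$. This structural assumption is implicit in the statement: the only role of $\boldsymbol{\rho}$ is as a regime indicator weighting the phases.

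Second, introduce two intermediate objects. The first is the ``oracle-weighted'' fusion $\mathbf{F}^{\rho}_{\text{fused}}$, obtained by replacing $\boldsymbol{\alpha}$ with $\boldsymbol{\rho}$ in Section~\ref{sec:consensus_fusion} while keeping the fuse MLP and the residual $\mathcal{R}(\mathbf{T}^{(K_{\text{diff}})})$. The second is the ``unregularised'' minimiser obtained at $\beta=0$. Then split
\[
\|f^*-\mathbf{F}_{\text{fused}}\|_2\le \|f^*-\mathbf{F}^{\rho}_{\text{fused}}\|_2+\|\mathbf{F}^{\rho}_{\text{fused}}-\mathbf{F}^{\alpha}_{\text{fused}}\|_2+\|\mathbf{F}^{\alpha}_{\text{fused}}-\mathbf{F}_{\text{fused}}\|_2 .
\]

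The first term is made at most $\epsilon$ by universal approximation of ReLU networks, exactly as in the readout steps of Theorems~\ref{thm:diffusion_representation}--\ref{thm:spectral_representation}. The plan is to choose $\mathbf{W}_{\text{fuse1}},\mathbf{W}_{\text{fuse2}},\mathbf{b}_{\text{fuse1}}$ so the MLP approximates $\Phi^*-\mathcal{R}$ uniformly on the compact set of bounded phase features. Boundedness of $\mathbf{T}^{(K)}$ follows from the contraction $\|\mathbf{I}-\tfrac{\Delta t\kappa}{C}\mathbf{L}\|\le1$ under the stability condition. Boundedness of $\mathbf{Z}_{\text{sync}}$ follows from the $\cos/\sin$ structure, and boundedness of $\mathbf{F}_{\text{spec}}$ from the eigenvectors having unit norm.

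For the second term, the weighted input differs by $\sum_i(\rho_i-\alpha_i)\mathbf{F}^{(i)}$, so it is bounded by
\[
\mathrm{Lip}(\text{MLP})\cdot\max_i\|\mathbf{F}^{(i)}\|\cdot\|\boldsymbol{\rho}-\boldsymbol{\alpha}\|_1 .
\]
Two inequalities then convert this into JS. Pinsker applied to each KL term against the midpoint gives $\|\boldsymbol{\rho}-\boldsymbol{\alpha}\|_1\le 2\sqrt{2\,\mathrm{JS}}$. To obtain the linear form $C_2\,\mathrm{JS}$ in the statement, either restrict to weights bounded away from zero, or absorb the square root by noting that on the simplex with JS bounded, $\sqrt{\mathrm{JS}}\le c\,\mathrm{JS}+\epsilon/3$ after enlarging $\epsilon$.

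For the third term, the JS regulariser in Eq.~\ref{eq:phase_loss} perturbs the minimiser. A standard penalty/stability argument (strong convexity in a neighbourhood, or comparison of optimal values) bounds the shift of the optimal parameters, and hence of the output, by $C_1\beta$.

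The consistency claim is immediate: $\boldsymbol{\alpha}$ is a softmax output, so $\sum_i\alpha_i=1$ exactly and any $\delta>0$ works. The penalty $(\sum\alpha_i-1)^2$ is then identically zero. Finally, realise $\boldsymbol{\alpha}\approx\boldsymbol{\rho}$ itself by universal approximation of the attention MLP applied to $\mathbf{F}_{\text{cat}}$, provided $\boldsymbol{\rho}$ is a continuous function of the features, which I would assume. Taking logits near $\log\rho_i$ makes the residual JS small, so the bound is nontrivial.

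The main obstacle is the third term. The statement concerns existence of parameters rather than a trained minimiser, so the $\beta$-dependence has to be given meaning. I would first try interpreting $\theta$ as a minimiser of $\mathcal{L}_{\text{task}}+\lambda_1\mathcal{L}_{\text{phase}}$ and use a first-order optimality comparison. Failing that, I would simply note that existence with $C_1\beta\ge0$ is implied by the $\beta=0$ construction, which makes the term vacuous but the inequality true.
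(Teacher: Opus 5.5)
\textbf{Gap: the residual in your first term.} Your fuse MLP sees only the weighted aggregate ($\sum_i\rho_i\mathbf{F}^{(i)}$ in your oracle version). Your target $\Phi^*\bigl(\sum_i\rho_i\mathbf{F}^{(i)}\bigr)-\mathcal{R}(\mathbf{T}^{(K_{\text{diff}})})$, however, also depends on $\mathbf{T}^{(K_{\text{diff}})}$ separately. Feature triples with the same aggregate generally have different diffusion blocks, so $\Phi^*-\mathcal{R}$ is not a function of the MLP's input. Universal approximation ``on the compact set of bounded phase features'' therefore does not apply. Since $\mathcal{R}$ is not among the parameters $\theta$, you cannot switch it off either.

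The natural repair is to build the residual into your structural assumption: take $f^*=\Phi^*\bigl(\sum_i\rho_i\mathbf{F}^{(i)}\bigr)+\mathcal{R}(\mathbf{T}^{(K_{\text{diff}})})$. Then term one only asks the MLP to approximate $\Phi^*$, and $\mathcal{R}$ still cancels in term two. The paper instead carries the residual as a separate error term $\delta_3$.

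\textbf{Two smaller points.} First, restricting the weights away from the simplex boundary does not give $\|\boldsymbol{\rho}-\boldsymbol{\alpha}\|_1\lesssim\mathrm{JS}$. JS is quadratic in $\boldsymbol{\rho}-\boldsymbol{\alpha}$ near the diagonal everywhere on the simplex. Only your absorption $\sqrt{x}\le x/(2\eta)+\eta/2$ works, and it makes $C_2$ depend on $\epsilon$, which the quantifier order allows. Second, for the third term your fallback is the correct reading. $\mathbf{F}_{\text{fused}}$ does not depend on $\beta$ at all, so in an existence claim that term is identically zero and no minimiser-stability argument is needed.

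\textbf{Comparison with the paper.} With the repair, your argument takes a genuinely different route. The paper applies the Lipschitz constant $L_f$ of $f^*$ to $\|\mathbf{F}(\boldsymbol{\rho}^*-\boldsymbol{\alpha})\|_2\le\sigma_{\max}(\mathbf{F})\|\boldsymbol{\rho}^*-\boldsymbol{\alpha}\|_2$. It bounds $\|\boldsymbol{\rho}^*-\boldsymbol{\alpha}\|_2$ by the uniform approximation error $\delta_1$ of the attention network and invokes a separate ReLU approximation result for the fusion map. It then adds the residual term and sets four pieces to $\epsilon/4$. The $C_1\beta$ and $C_2\,\mathrm{JS}$ terms are only asserted to arise, not derived.

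You instead push the weight perturbation through the Lipschitz constant of the finite ReLU network. This means you need $\Phi^*$ merely continuous rather than Lipschitz, at the price of an $\epsilon$-dependent $C_2$. Pinsker plus the square-root absorption then genuinely derives a $C_2\,\mathrm{JS}(\boldsymbol{\rho},\boldsymbol{\alpha})$ term valid for arbitrary attention weights, which the paper never does.

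Your explicit regime-weighted form of $f^*$ is the same assumption the paper makes implicitly when it writes $\|f^*(\mathbf{F}_{\text{cat}})-f^*(\mathbf{F}\boldsymbol{\alpha})\|_2\le L_f\|\mathbf{F}(\boldsymbol{\rho}^*-\boldsymbol{\alpha})\|_2$. The consistency clause is handled identically in both, via the softmax.
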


\begin{proof}[\textbf{Proof Sketch}]
We establish rigorous approximation bounds through spectral analysis and functional approximation theory.

For \textit{attention function approximation};
Let $\Omega = \{\mathbf{F}_{\text{cat}} : \|\mathbf{F}_{\text{cat}}\|_2 \leq R\}$ be the compact domain of concatenated features. Define the optimal attention function $\alpha^*: \Omega \to \Delta^2$ where $\Delta^2 = \{\mathbf{x} \in \mathbb{R}^3 : x_i \geq 0, \sum_i x_i = 1\}$ is the 2-simplex. By Theorem~\ref{thm:sync_representation} of Luo et al. \cite{luo2008dynamic}, for any $\delta_1 > 0$ and compact $\Omega$, there exists a two-layer ReLU network with $M_1$ hidden units such that:
\[
\sup_{\mathbf{F} \in \Omega} \|\alpha^*(\mathbf{F}) - \sigma(\mathbf{W}_{\alpha2}\sigma_{\text{ReLU}}(\mathbf{W}_{\alpha1}\mathbf{F} + \mathbf{b}_{\alpha1}))\|_2 \leq C_{\alpha} M_1^{-1/2},
\]
where $C_{\alpha} = 3\sqrt{6}\|\alpha^*\|_{C^1(\Omega)}$ depends on the Lipschitz constant of $\alpha^*$. Setting $M_1 \geq (C_{\alpha}/\delta_1)^2$ ensures the approximation error bound and $\sigma$ the softmax.

The \textit{Weighted Aggregation Error Analysis}, we
define the phase feature matrix $\mathbf{F} = [\mathbf{F}^{(1)}, \mathbf{F}^{(2)}, \mathbf{F}^{(3)}] \in \mathbb{R}^{B \times N \times 3D}$ and weight vectors $\boldsymbol{\alpha}, \boldsymbol{\rho} \in \Delta^2$. The weighted aggregation error satisfies:
\begin{align*}
\|\mathbf{F}\boldsymbol{\rho}^* - \mathbf{F}\boldsymbol{\alpha}\|_2 = \|\mathbf{F}(\boldsymbol{\rho}^* - \boldsymbol{\alpha})\|_2 
\leq \|\mathbf{F}\|_{\text{op}} \|\boldsymbol{\rho}^* - \boldsymbol{\alpha}\|_2 \\
\leq \sigma_{\max}(\mathbf{F}) \delta_1,
\end{align*}
where $\sigma_{\max}(\mathbf{F})$ is the largest singular value of the feature matrix. Under the assumption $\mathbb{E}[\|\mathbf{F}^{(i)}\|_2^2] \leq \sigma^2$ for all phases $i$, we have $\mathbb{E}[\sigma_{\max}(\mathbf{F})] \leq \sqrt{3}\sigma$.

The \textit{Nonlinear Fusion Function Approximation};
Let $g^*: \mathbb{R}^{D_{\text{total}}} \to \mathbb{R}^D$ be the optimal fusion function. Define the Hölder space $C^{0,\gamma}(\Omega)$ with $\gamma$-Hölder continuous functions. By the approximation theory for ReLU networks (Yarotsky, 2017), for any $g^* \in C^{0,\gamma}(\Omega)$ with $\gamma \in (0,1]$, there exists a ReLU network $\hat{g}$ with depth $L = O(\log(1/\delta_2))$ and width $W = O(\delta_2^{-d/\gamma})$ such that:
\[
\sup_{\mathbf{x} \in \Omega} |g^*(\mathbf{x}) - \hat{g}(\mathbf{x})| \leq \delta_2,
\]
where $d = D_{\text{total}}$ is the input dimension. The fusion network satisfies this approximation with $\hat{g}(\mathbf{x}) = \mathbf{W}_{\text{fuse2}}\sigma_{\text{ReLU}}(\mathbf{W}_{\text{fuse1}}\mathbf{x} + \mathbf{b}_{\text{fuse1}})$.

The residual connection $\mathcal{R}(\mathbf{T}^{(K_{\text{diff}})})$ preserves the spectral properties of diffusion features. Let $\mathbf{T}^{(K_{\text{diff}})} = \sum_{i=1}^N c_i \phi_i$ be the eigendecomposition with respect to the graph Laplacian $\mathbf{L}$. The residual projection satisfies:
\[
\|\mathcal{R}(\mathbf{T}^{(K_{\text{diff}})}) - \mathbf{P}_D \mathbf{T}^{(K_{\text{diff}})}\|_F \leq \delta_3,
\]
where $\mathbf{P}_D$ is the optimal projection matrix. By the Johnson-Lindenstrauss lemma, a random projection with $O(\log N/\epsilon^2)$ dimensions preserves distances with high probability.

The \textit{Jensen-Shannon Divergence Analysis};
For probability distributions $\mathbf{P}_1, \mathbf{P}_2, \mathbf{P}_3$ representing phase predictions, the Jensen-Shannon divergence is defined as:
\[
\text{JS}(\mathbf{P}_1, \mathbf{P}_2, \mathbf{P}_3) = H\left(\frac{1}{3}\sum_{i=1}^3 \mathbf{P}_i\right) - \frac{1}{3}\sum_{i=1}^3 H(\mathbf{P}_i),
\]
where $H(\mathbf{P}) = -\sum_j P_j \log P_j$ is the Shannon entropy. This divergence is bounded: $0 \leq \text{JS}(\mathbf{P}_1, \mathbf{P}_2, \mathbf{P}_3) \leq \log 3$. The regularisation term $\beta \cdot \text{JS}(\cdot)$ in $\mathcal{L}_{\text{consistency}}$ encourages consensus with strength controlled by $\beta$.

The constraint $\mathbb{E}[(\sum_{i=1}^3 \alpha_i - 1)^2] = 0$ is automatically satisfied by the softmax normalisation. However, numerical precision introduces error $\mathcal{O}(\epsilon_{\text{machine}})$ where $\epsilon_{\text{machine}}$ is machine precision.

The \textit{Global Error Decomposition and Bound}; We
decompose the total approximation error using the triangle inequality:
\begin{align*}
\|f^*(\mathbf{F}_{\text{cat}}) - \mathbf{F}_{\text{fused}}\|_2 &\leq \|f^*(\mathbf{F}_{\text{cat}}) - f^*(\mathbf{F}\boldsymbol{\alpha})\|_2 \\
&\phantom{{}\leq{}} + \|f^*(\mathbf{F}\boldsymbol{\alpha}) - g(\mathbf{F}\boldsymbol{\alpha})\|_2 \\
&\phantom{{}\leq{}} + \|g(\mathbf{F}\boldsymbol{\alpha}) - \hat{g}(\mathbf{F}\boldsymbol{\alpha})\|_2 \\
&\phantom{{}\leq{}} + \|\mathbf{F}_{\text{phase}} - \mathcal{R}(\mathbf{T}^{(K_{\text{diff}})})\|_2.
\end{align*}

Using Lipschitz continuity of $f^*$ with constant $L_f$:
\begin{align*}
\|f^*(\mathbf{F}_{\text{cat}}) - \mathbf{F}_{\text{fused}}\|_2 &\leq L_f \|\mathbf{F}(\boldsymbol{\rho}^* - \boldsymbol{\alpha})\|_2 + \delta_g + \delta_2 + \delta_3 \\
&\leq L_f \sqrt{3}\sigma \delta_1 + \delta_g + \delta_2 + \delta_3,
\end{align*}
where $\delta_g$ accounts for the gap between optimal and approximated fusion functions.

The consistency loss $\mathcal{L}_{\text{consistency}}$ introduces a bias-variance trade-off. As $\beta \to 0$, the fusion becomes more flexible but potentially unstable. As $\beta \to \infty$, predictions converge to a consensus but lose adaptivity. The optimal choice satisfies:
\[
\beta^* = \arg\min_{\beta} \mathbb{E}[\|\mathbf{y} - \mathbf{F}_{\text{fused}}\|_2^2] + \beta \cdot \text{JS}(\mathbf{P}_1, \mathbf{P}_2, \mathbf{P}_3).
\]

Setting network widths as $M_1 = (C_{\alpha}/(\epsilon/4))^2$, depth $L = O(\log(4/\epsilon))$, and choosing $\beta = O(\epsilon)$, we obtain:
\[
\mathbb{E}[\|f^* - \mathbf{F}_{\text{fused}}\|_2] \leq \frac{\epsilon}{4} + \frac{\epsilon}{4} + \frac{\epsilon}{4} + \frac{\epsilon}{4} = \epsilon.
\]
The constants $C_1, C_2$ arise from the regularisation trade-off and attention divergence terms respectively.
\end{proof}
\section{Experimental Results and Evaluation}
\label{sec:experiments}
\subsection{Hyperparameter Settings}
\label{sec:hyperparameters}
We employ PIMCST with hidden dimension $d_h = 16$, utilising 6 diffusion steps for traffic propagation modeling, 10 synchronisation steps for rhythm analysis, and 8 spectral eigenvectors for structural embedding. We adopt a two-stage training strategy: 250 epochs of source city pre-training using AdamW optimiser with learning rate $\alpha_s = 0.0003$, followed by 250 epochs of target city fine-tuning at reduced rate $\alpha_t = 0.0001$. For few-shot learning, we configure support set size $K = 12$ and query set size $Q = 16$, with meta-learning rates $\alpha_{\text{inner}} = 0.0005$ (inner loop) and $\alpha_{\text{outer}} = 0.0001$ (outer loop). We apply regularisation including weight decay $\lambda = 10^{-4}$, gradient clipping threshold $\tau = 1.0$, dropout rate $\delta = 0.1$, and early stopping with patience 20 epochs and minimum improvement $\Delta = 10^{-5}$. All experiments are conducted over 5 independent runs on Google Compute Engine (T4 GPU, 15GB VRAM, 51GB RAM).

\begin{table}[ht]
\centering
\caption{Datasets and statistics used in the experiments}
\label{tab:Dataset}
\small
\scalebox{0.95}{
\begin{tabular}{lcccc} 
\hline
\textbf{Dataset} & \textbf{METR-LA} & \textbf{PEMS-BAY} & \textbf{Chengdu} & \textbf{Shenzhen} \\ 
\hline
Nodes & 207 & 325 & 524 & 627\\
Edges & 1,722 & 2,694 & 1,120 & 4,845\\
Interval & 5 min & 5 min & 10 min & 10 min \\
Time Span & 34,272 & 52,116 & 17,280 & 17,280\\ 
Mean & 58.274 & 61.776 & 29.023 & 31.001 \\
Std & 13.128 & 9.285 & 9.662 & 10.969 \\ 
\hline
\end{tabular}
}
\end{table}

\subsection{Datasets and Preprocessing}
\label{sec:datasets}
MCPST is evaluated on four real-world traffic datasets: METR-LA, PEMS-BAY, Shenzhen, and Chengdu, using a source-target-test split paradigm with 3-day adaptation sets (5–10\% of training volume) for few-shot fine-tuning. Data is Z‑score normalised, with adjacency matrices constructed from sensor connectivity and enriched with physics‑inspired features (node degree, flow variance, neighbour influence, temporal gradient). We adopt a sliding‑window approach with $L=12$ historical steps and $H=12$ prediction horizon, using MAE and RMSE evaluation metrics.

\subsection{Baseline Methods}
To evaluate the performance of \textbf{MCPST}, we compare it with commonly used and state-of-the-art methods. The baselines can be divided into the following groups. For spatio-temporal prediction tasks, baselines are: (i) \textit{Spatio-temporal graph learning methods}: ST-DTNN \cite{Zhou2020SpatialTemporalDT}, CHAMFormer \cite{fofanah2025chamformer}, DDGCRN \cite{Weng2023ADD}, and FOGS \cite{Rao2022FOGSFG}; (ii) \textit{Dynamic graph transfer learning methods}: For cross-domain comparisons, we additionally use the following methods: DTAN \cite{Li2022NetworkscaleTP}, DASTNet \cite{Tang2022DomainAS}, ST-GFSL \cite{Lu2022SpatioTemporalGF}, TPB \cite{Liu2023CrosscityFT}, TransGTR \cite{Jin2023TransferableGS}, and Cross-IDR \cite{yang2025cross}; and (iii) \textit{Prompt-based spatio-temporal prediction methods}: STGP \cite{Hu2024PromptBasedSG}, DynAGS \cite{duan2025dynamic}, PromptST \cite{zhang2023promptst}, ProST \cite{xia2025prost}, and FlashST \cite{li2024flashst}.

\begin{table*}[ht]
\centering
\caption{Prediction performance comparison on the METR-LA and PEMS-BAY datasets. We denote the best, second-best, and third-best as \textbf{bold}, \underline{underlined}, and double \uuline{underlined}, respectively. The numbers 5, 15, 30, and 60 are the different time horizons in minutes.}
\label{table:Performance_METR_PEMS}
\fontsize{5.5}{10}\selectfont
\begin{tabular}{llcccccccccccccccc}
\hline
\multirow{3}{*}{\textbf{Model}} & \multirow{3}{*}{\textbf{Model Type}} & \multicolumn{8}{c}{\textbf{METR-LA}} & \multicolumn{8}{c}{\textbf{PEMS-BAY}} \\
\cline{3-10} \cline{11-18}
& & \multicolumn{4}{c}{\textbf{MAE($\downarrow$)}} & \multicolumn{4}{c}{\textbf{RMSE($\downarrow$)}} & \multicolumn{4}{c}{\textbf{MAE($\downarrow$)}} & \multicolumn{4}{c}{\textbf{RMSE($\downarrow$)}} \\
\cline{3-6} \cline{7-10} \cline{11-14} \cline{15-18}
&\textbf{Horizons} & 5 & 15 & 30 & 60 & 5 & 15 & 30 & 60 & 5 & 15 & 30 & 60 & 5 & 15 & 30 & 60 \\
\hline
ST-DTNN & \multirow{7}{*}{\centering Reptile} & 2.6104 & 3.3952 & 4.0917 & 4.9823 & 4.3516 & 6.0988 & 7.4514 & 9.3159 & 1.5713 & 1.9812 & 2.4116 & 2.8927 & 2.4215 & 3.5439 & 4.6932 & 6.5328 \\
ST-GCN & & 2.7018 & 3.3216 & 4.2119 & 5.1024 & 4.3057 & 6.7983 & 7.4158 & 9.4286 & 1.4772 & 1.7575 & 2.3493 & 2.8128 & 2.5106 & 3.7342 & 4.8325 & 6.3129 \\
DDGCRN & & 2.6053 & 3.3159 & 4.2097 & 5.0986 & 4.3018 & 6.2914 & 7.4113 & 9.4027 & 1.4148 & 2.0226 & 2.4839 & 2.9324 & 2.5357 & 3.6418 & 4.6325 & 6.5028 \\
FOGS & & 2.5627 & 3.3645 & 3.9958 & 4.8923 & 4.3442 & 6.1158 & 7.4056 & 9.2879 & 1.3647 & 1.9224 & 2.3837 & 2.8126 & 2.3359 & 3.4413 & 4.5328 & 6.3125 \\
DTAN & & 2.5793 & 3.3857 & 4.0915 & 4.9872 & 4.3491 & 6.2104 & 7.4193 & 9.3026 & 1.3514 & 1.9158 & 2.3917 & 2.8324 & 2.3658 & 3.5129 & 4.4896 & 6.3027 \\
DASTNet & & 2.4416 & 3.1148 & 3.8659 & 4.7127 & 4.2103 & 5.7298 & 7.2893 & 9.0124 & 1.3559 & 1.8963 & 2.2818 & 2.7127 & 2.6784 & 3.4168 & 4.5216 & 6.2129 \\
CHAMFormer & & 2.5122 & 3.2411 & 3.9979 & 4.9217 & 4.3538 & 6.0715 & 7.4156 & 9.3183 & 1.4981 & 1.9548 & 2.4012 & 2.9059 & 2.5437 & 3.5188 & 4.5967 & 6.3894 \\
\hline
ST-GFSL & \multirow{5}{*}{\centering Transfer} & 2.4313 & 3.0346 & 3.8728 & 4.7024 & 4.2327 & 5.7243 & 7.2816 & 8.9879 & 1.1845 & 1.7348 & 2.2217 & 2.6129 & 2.0193 & 3.1947 & 4.5726 & 5.9218 \\
TPB & & \uuline{2.3927} & \uuline{2.9118} & 3.6943 & 4.5126 & 4.1329 & \uuline{5.5562} & \uuline{6.9138} & 8.7453 & 1.1839 & 1.7326 & \uuline{2.2254} & \uuline{2.6027} & 1.8843 & \uuline{3.1325} & \uuline{4.2749} & 5.7628 \\
AdaRNN & & 2.6038 & 3.1847 & 3.9015 & 4.7329 & 4.4103 & 5.7746 & 7.3364 & 9.0328 & 1.1897 & 1.7513 & 2.3815 & 2.7128 & 1.9829 & 3.3048 & 4.4027 & 5.9826 \\
TransGTR & & 2.3859 & 3.0123 & 3.6428 & 4.4426 & \uuline{4.1297} & 5.6043 & 7.1279 & 8.7015 & \underline{1.1658} & \uuline{1.7053} & 2.1348 & 2.7913 & 1.7987 & \underline{3.0436} & 4.3584 & \uuline{5.6829} \\
Cross-IDR & & 2.4685 & 3.1347 & 3.8198 & \underline{4.2193} & 4.1952 & 5.6217 & 6.8986 & \underline{8.6534} & 1.1749 & \underline{1.6178} & 2.1746 & 2.5893 & 1.8215 & 3.1876 & 4.2318 & 5.6329 \\
\hline
STGP & \multirow{5}{*}{\centering Prompt-Based} & \underline{2.2983} & 2.9736 & \underline{3.5418} & \uuline{4.2329} & \underline{4.0757} & \underline{5.4813} & \underline{6.7724} & 8.5987 & \uuline{1.1725} & 1.7453 & \underline{2.1358} & \underline{2.7036} & \underline{1.7923} & 3.2148 & \underline{4.2017} & \underline{5.4613} \\
DynAGS & & \uuline{2.3205} & \uuline{3.0021} & \uuline{3.5769} & 4.2747 & 4.1153 & \uuline{5.5354} & \uuline{6.8392} & \uuline{8.6846} & 1.1833 & 1.7628 & 2.1569 & 2.7303 & 1.8095 & \uuline{3.2467} & 4.2436 & 5.5159 \\
PromptST & & 2.3432 & 3.0321 & 3.6113 & 4.3169 & 4.1561 & 5.5902 & 6.9078 & 8.7707 & 1.1951 & 1.7795 & 2.1773 & 2.7578 & 1.8274 & 3.2789 & 4.2857 & 5.5708 \\
ProST & & 2.3664 & 3.0628 & 3.6479 & 4.3583 & 4.1979 & 5.6451 & 6.9757 & 8.8552 & 1.2078 & 1.7971 & 2.1996 & 2.7847 & 1.8453 & 3.3109 & 4.3276 & 5.6243 \\
FlashST & & 2.3897 & 3.0913 & 3.6821 & 4.4019 & 4.2386 & 5.7008 & 7.0423 & 8.9414 & 1.2196 & 1.8143 & 2.2208 & 2.8117 & 1.8639 & 3.3421 & 4.3698 & 5.6797 \\
\hline
MCPST & & \textbf{1.4729} & \textbf{2.3438} & \textbf{2.8476} & \textbf{3.1043} & \textbf{2.0992} & \textbf{2.7690} & \textbf{4.6318} & \textbf{6.0458} & \textbf{1.0244} & \textbf{1.3652} & \textbf{1.7927} & \textbf{2.0792} & \textbf{1.3390} & \textbf{2.4071} & \textbf{2.8463} & \textbf{3.4886} \\
Std. Dev. & & 0.0083 & 0.0052 & 0.0167 & 0.0294 & 0.0215 & 0.0118 & 0.0953 & 0.0321 & 0.0027 & 0.0084 & 0.0162 & 0.0309 & 0.0075 & 0.0031 & 0.0227 & 0.0348 \\
\hline
\end{tabular}
\end{table*}

\begin{table*}[ht]
\centering
\caption{Prediction performance comparison on the Chengdu and Shenzhen datasets. We denote the best, second-best, and third-best as \textbf{bold}, \underline{underlined}, and double \uuline{underlined}, respectively. The numbers 10, 15, 30, and 60 are the different time horizons in minutes.}
\label{table:Performance_Chengdu_Shenzhen}
\fontsize{5.5}{10}\selectfont
\begin{tabular}{llcccccccccccccccc}
\hline
\multirow{3}{*}{\textbf{Model}} & \multirow{3}{*}{\textbf{Model Type}} & \multicolumn{8}{c}{\textbf{Chengdu}} & \multicolumn{8}{c}{\textbf{Shenzhen}} \\
\cline{3-10} \cline{11-18}
& & \multicolumn{4}{c}{\textbf{MAE($\downarrow$)}} & \multicolumn{4}{c}{\textbf{RMSE($\downarrow$)}} & \multicolumn{4}{c}{\textbf{MAE($\downarrow$)}} & \multicolumn{4}{c}{\textbf{RMSE($\downarrow$)}} \\
\cline{3-6} \cline{7-10} \cline{11-14} \cline{15-18}
&\textbf{Horizons} & 10 & 15 & 30 & 60 & 10 & 15 & 30 & 60 & 10 & 15 & 30 & 60 & 10 & 15 & 30 & 60 \\
\hline
ST-DTNN & \multirow{7}{*}{\centering Reptile} & 2.3328 & 2.6453 & 2.9217 & 3.4926 & 3.3154 & 3.9873 & 4.2318 & 4.8827 & 1.9746 & 2.0513 & 2.3968 & 2.9115 & 2.8719 & 3.0547 & 3.7118 & 4.3916 \\
ST-GCN & & 2.3185 & 2.5437 & 2.8953 & 3.3658 & 3.3092 & 3.9328 & 4.2117 & 4.7949 & 1.9813 & 2.0618 & 2.3759 & 2.8963 & 2.8667 & 3.2115 & 3.6984 & 4.3257 \\
DDGCRN & & 2.2968 & 2.6459 & 2.8797 & 3.3896 & 3.3043 & 3.6514 & 4.2617 & 4.7858 & 1.9547 & 2.1108 & 2.3719 & 2.8543 & 2.8749 & 3.0216 & 3.6797 & 4.3642 \\
FOGS & & 2.2614 & 2.5439 & 2.8896 & 3.2958 & 3.2717 & 3.6518 & 4.2167 & 4.7156 & 1.9615 & 2.2258 & 2.8517 & 3.3159 & 2.8518 & 3.2147 & 4.2103 & 4.9718 \\
DTAN & & 2.2507 & 2.5643 & 2.7898 & 3.2516 & 3.1984 & 3.6537 & 4.3118 & 4.6593 & 1.8959 & 2.2117 & 2.8448 & 3.3086 & 2.8629 & 3.2093 & 4.2164 & 4.9875 \\
DASTNet & & 2.2937 & 2.5658 & 2.9015 & 3.3329 & 3.3617 & 3.7278 & 4.2783 & 4.5317 & 1.7458 & 1.9783 & 2.3767 & 2.6395 & 2.4519 & 2.7438 & 3.5167 & 4.1146 \\
CHAMFormer & & 2.2913 & 2.5962 & 2.8889 & 3.3378 & 3.2949 & 3.7718 & 4.2621 & 4.7163 & 1.9073 & 2.1129 & 2.5687 & 2.9789 & 2.8087 & 3.0379 & 3.8498 & 4.5557 \\
\hline
ST-GFSL & \multirow{5}{*}{\centering Transfer} & 2.1897 & 2.2438 & \underline{2.5816} & 2.9289 & 3.1923 & 3.4567 & 3.8218 & 4.3397 & 1.8943 & 1.9878 & 2.3886 & 2.6437 & 2.7648 & 3.0459 & 3.4796 & 4.1038 \\
TPB & & 2.2843 & 2.5436 & 2.8637 & 3.2829 & 3.0628 & 3.4573 & 3.8107 & 4.3098 & 1.8039 & 1.9678 & \textbf{2.2243} & 2.5137 & 2.6829 & 2.7863 & \uuline{3.3247} & 3.8169 \\
AdaRNN & & 2.2608 & 2.4587 & 2.7249 & 3.0383 & 3.2318 & 3.7453 & 3.9478 & 4.3249 & 2.1078 & 2.2679 & 2.4738 & 2.8076 & 3.0417 & 3.3658 & 3.6747 & 4.2319 \\
TransGTR & & 2.2814 & \uuline{2.5127} & 2.6589 & \underline{2.8073} & \uuline{2.9658} & \uuline{3.2318} & 3.8157 & \uuline{4.2639} & \underline{1.6547} & \uuline{1.8953} & 2.3058 & \uuline{2.4763} & \uuline{2.6158} & \uuline{2.7063} & 3.4919 & 3.7954 \\
Cross-IDR & & 2.1739 & 2.1543 & \uuline{2.6517} & 2.7786 & 3.0987 & 3.3879 & 3.8543 & 4.2897 & 1.7857 & 1.9673 & 2.2659 & 2.5248 & 2.7117 & 2.8986 & 3.4218 & 3.8923 \\
\hline
STGP & \multirow{5}{*}{\centering Prompt-Based} & \underline{1.8978} & \underline{1.9847} & 2.7456 & \uuline{2.8659} & \underline{2.8963} & \underline{3.2297} & \underline{3.7268} & \underline{4.0457} & 1.7658 & \underline{1.8247} & 2.2749 & \textbf{2.4276} & \underline{2.5768} & \underline{2.6697} & 3.3958 & \uuline{3.6917} \\
DynAGS & & \uuline{1.9163} & \uuline{2.0032} & \uuline{2.7729} & 2.8931 & \uuline{2.9257} & \uuline{3.2619} & \uuline{3.7637} & \uuline{4.0859} & 1.7829 & \uuline{1.8428} & \uuline{2.2963} & \uuline{2.4517} & \uuline{2.6013} & \uuline{2.6954} & \uuline{3.4297} & \uuline{3.7279} \\
PromptST & & 1.9346 & 2.0234 & 2.7993 & 2.9229 & 2.9543 & 3.2931 & 3.8002 & 4.1267 & 1.8008 & 1.8609 & 2.3191 & 2.4759 & 2.6273 & 2.7229 & 3.4637 & 3.7633 \\
ProST & & 1.9532 & 2.0439 & 2.8278 & 2.9513 & 2.9837 & 3.3253 & 3.8389 & 4.1661 & 1.8173 & 1.8784 & 2.3428 & 2.5007 & 2.6539 & 2.7498 & 3.4976 & 3.8009 \\
FlashST & & 1.9725 & 2.0631 & 2.8542 & 2.9803 & 3.0128 & 3.3587 & 3.8751 & 4.2078 & 1.8359 & 1.8979 & 2.3657 & 2.5249 & 2.6798 & 2.7753 & 3.5318 & 3.8362 \\
\hline
MCPST & & \textbf{1.3699} & \textbf{1.5369} & \textbf{2.0558} & \textbf{2.2619} & \textbf{2.2089} & \textbf{2.4253} & \textbf{2.9088} & \textbf{3.2049} & \textbf{1.2564} & \textbf{1.4976} & \underline{1.7507} & \underline{2.0103} & \textbf{1.2169} & \textbf{2.0490} & \textbf{2.6298} & \textbf{2.8777} \\
Std. Dev. & & 0.0097 & 0.0173 & 0.0189 & 0.0167 & 0.0178 & 0.0146 & 0.0114 & 0.0179 & 0.0063 & 0.0628 & 0.0324 & 0.0129 & 0.0087 & 0.0224 & 0.0183 & 0.0226 \\
\hline
\end{tabular}
\end{table*}

\subsection{Performance Comparison with Baseline Methods}

\subsubsection{Spatio-Temporal Graph Learning Methods (Reptile-Based Approaches)}
MCPST significantly outperforms traditional reptile-based methods (ST-DTNN, ST-GCN, DASTNet, etc.), achieving 28–42\% MAE and 30–38\% RMSE improvements across all datasets. The most notable gains occur at longer horizons (e.g., 34.2\% MAE improvement at 60-min on METR-LA), validating that MCPST's multi-phase consensus framework overcomes the limitations of static graph representations and explicit traffic regime modelling.

\subsubsection{Dynamic Graph Transfer Learning Methods}
Compared to transfer learning approaches (ST-GFSL, TransGTR, Cross-IDR), MCPST demonstrates superior generalisation with 13.5–19.4\% MAE improvements. Unlike parameter-tuning adaptation methods, MCPST's embedded spectral phase provides intrinsic structural understanding, while adaptive consensus fusion enables robust forecasting across diverse scenarios, evidenced by lower performance variance and superior long-horizon stability.

\subsubsection{Prompt-Based Spatio-Temporal Prediction Methods}
MCPST outperforms prompt-based techniques (STGP, DynAGS, PromptST) by 22.8\% RMSE on average, with strong gains at medium horizons (22.5\% improvement at 30-min on Shenzhen). While prompt methods rely on external task conditioning, MCPST's integrated multi-phase architecture provides a principled approach to traffic dynamics, delivering more reliable predictions especially at longer horizons where dynamic regime interactions are critical.

\begin{table*}[ht]
\centering
\caption{Ablation study of MCPST components on METR-LA and Chengdu datasets. Where D=Diffusion Phase, S=Synchronisation Phase, SP=Spectral Phase, MS=Multi-Scale Encoding, and AF=Adaptive Fusion; \cmark~indicates the component is included, \xmark~indicates it is excluded.}
\label{table:mcpst_ablation_study_full}
\adjustbox{width=\textwidth,center}
{
\fontsize{6}{8}\selectfont
\renewcommand{\arraystretch}{0.75}
\setlength{\tabcolsep}{4pt}
\begin{tabular}{l|ccccc|cccccc|cccccc}
\toprule
\multirow{2}{*}{\textbf{Variant}} & 
\multicolumn{5}{c|}{\textbf{Components}} & 
\multicolumn{6}{c|}{\textbf{METR-LA}} & 
\multicolumn{6}{c}{\textbf{Chengdu}} \\[-2pt]
\cmidrule(lr){2-6} \cmidrule(lr){7-12} \cmidrule(lr){13-18}
& \rotatebox{90}{\textbf{D}} & \rotatebox{90}{\textbf{S}} & \rotatebox{90}{\textbf{SP}} & \rotatebox{90}{\textbf{MS}} & \rotatebox{90}{\textbf{AF}} & 
\multicolumn{2}{c}{\textbf{15-min}} & \multicolumn{2}{c}{\textbf{30-min}} & \multicolumn{2}{c|}{\textbf{60-min}} & 
\multicolumn{2}{c}{\textbf{15-min}} & \multicolumn{2}{c}{\textbf{30-min}} & \multicolumn{2}{c}{\textbf{60-min}} \\[-2pt]
\cmidrule(lr){7-8} \cmidrule(lr){9-10} \cmidrule(lr){11-12} \cmidrule(lr){13-14} \cmidrule(lr){15-16} \cmidrule(lr){17-18}
& & & & & & 
\textbf{MAE} & \textbf{RMSE} & \textbf{MAE} & \textbf{RMSE} & \textbf{MAE} & \textbf{RMSE} & 
\textbf{MAE} & \textbf{RMSE} & \textbf{MAE} & \textbf{RMSE} & \textbf{MAE} & \textbf{RMSE} \\
\midrule
\textbf{MCPST (Full)} & \cmark & \cmark & \cmark & \cmark & \cmark & \textbf{2.3438} & \textbf{2.7690} & \textbf{2.8476} & \textbf{4.6318} & \textbf{3.1043} & \textbf{6.0458} & \textbf{1.5369} & \textbf{2.4253} & \textbf{2.0558} & \textbf{2.9088} & \textbf{2.2619} & \textbf{3.2049} \\
\midrule
w/o Diffusion Phase & \xmark & \cmark & \cmark & \cmark & \cmark & 2.8914 & 3.5218 & 3.3947 & 5.4189 & 4.1658 & 7.0921 & 1.8427 & 2.8519 & 2.3842 & 3.2417 & 2.6984 & 3.8542 \\
w/o Synchronization Phase & \cmark & \xmark & \cmark & \cmark & \cmark & 2.5319 & 3.0256 & 3.0528 & 4.9127 & 3.7185 & 6.5239 & 1.6748 & 2.6317 & 2.2154 & 3.0842 & 2.5146 & 3.6248 \\
w/o Spectral Phase & \cmark & \cmark & \xmark & \cmark & \cmark & 2.4376 & 2.9845 & 3.1265 & 4.8291 & 3.5124 & 6.2975 & 1.5893 & 2.5281 & 2.1249 & 3.0157 & 2.3819 & 3.4276 \\
w/o Multi-Scale Encoding & \cmark & \cmark & \cmark & \xmark & \cmark & 2.6984 & 3.1842 & 3.2854 & 5.1264 & 3.8947 & 6.8512 & 1.7326 & 2.7248 & 2.2746 & 3.1842 & 2.6137 & 3.6985 \\
w/o Adaptive Fusion & \cmark & \cmark & \cmark & \cmark & \xmark & 2.5163 & 2.9689 & 3.0842 & 4.7913 & 3.5987 & 6.2154 & 1.6428 & 2.5864 & 2.1875 & 3.0741 & 2.4592 & 3.5027 \\
\midrule
w/o D + S (Spectral Only) & \xmark & \xmark & \cmark & \cmark & \cmark & 3.2841 & 3.9745 & 3.8742 & 6.1248 & 4.5312 & 7.8243 & 2.0147 & 3.0953 & 2.6148 & 3.6524 & 2.9418 & 4.1257 \\
w/o D + SP (Sync Only) & \xmark & \cmark & \xmark & \cmark & \cmark & 3.1257 & 3.7248 & 3.6214 & 5.8419 & 4.3254 & 7.4126 & 1.8942 & 2.9416 & 2.4872 & 3.4285 & 2.8124 & 3.9273 \\
w/o S + SP (Diff Only) & \cmark & \xmark & \xmark & \cmark & \cmark & 2.9846 & 3.5127 & 3.4859 & 5.6274 & 4.1895 & 7.1529 & 1.8275 & 2.8247 & 2.3684 & 3.2851 & 2.7362 & 3.8451 \\
w/o MS + AF & \cmark & \cmark & \cmark & \xmark & \xmark & 2.8963 & 3.4128 & 3.4126 & 5.3284 & 4.0124 & 6.9874 & 1.8149 & 2.8527 & 2.3452 & 3.2548 & 2.6948 & 3.7942 \\
w/o D + MS & \xmark & \cmark & \cmark & \xmark & \cmark & 3.4127 & 4.1248 & 3.9725 & 6.3241 & 4.7125 & 8.2415 & 2.1246 & 3.2417 & 2.7418 & 3.8215 & 3.0842 & 4.3129 \\
\bottomrule
\end{tabular}
}
\end{table*}

\subsection{Ablation Study}
\label{sec:ablation_study}
To further demonstrate the effectiveness of each phase in MCPST, we conduct an ablation study to evaluate our full framework against the following five variants: (1) without diffusion phase, (2) without synchronisation phase, (3) without spectral phase, (4) without multi-scale encoding, and (5) without adaptive fusion. Additionally, we consider five double-ablated variants to study the interactions between key components. The results on METR-LA and Chengdu datasets for 15-, 30-, and 60-minute horizons are summarised in Table~\ref{table:mcpst_ablation_study_full}. We observe that the absence of the diffusion phase leads to the most significant performance degradation across all horizons, with MAE increasing by 34.2\% and RMSE by 17.3\% on METR-LA at the 60-minute horizon. The synchronisation phase removal causes progressive degradation at longer horizons (19.8\% MAE increase at 60-min vs 8.0\% at 15-min on METR-LA), confirming its importance for maintaining prediction accuracy over extended periods. The spectral phase demonstrates increasing importance with horizon length, with Chengdu showing particularly strong dependency (16.9\% MAE degradation at 60-min vs 3.4\% at 15-min). Multi-scale encoding contributes substantially to all horizons, while adaptive fusion shows consistent but moderate impact across temporal scales.

The ablation study reveals three key patterns: (1) diffusion phase provides essential short-term accuracy, (2) synchronisation and spectral phases exhibit complementary horizon-dependent importance, and (3) multi-scale encoding amplifies all components' effectiveness. These findings validate MCPST's architectural design where diffusion handles immediate propagation, synchronisation captures rhythmic patterns, and spectral analysis enables structural generalisation. The synergistic integration is evidenced by severe degradation in double ablation experiments (51.8\% MAE increase when diffusion+multi-scale are removed).

\begin{figure}[t]
\centering
\includegraphics[width=\linewidth]{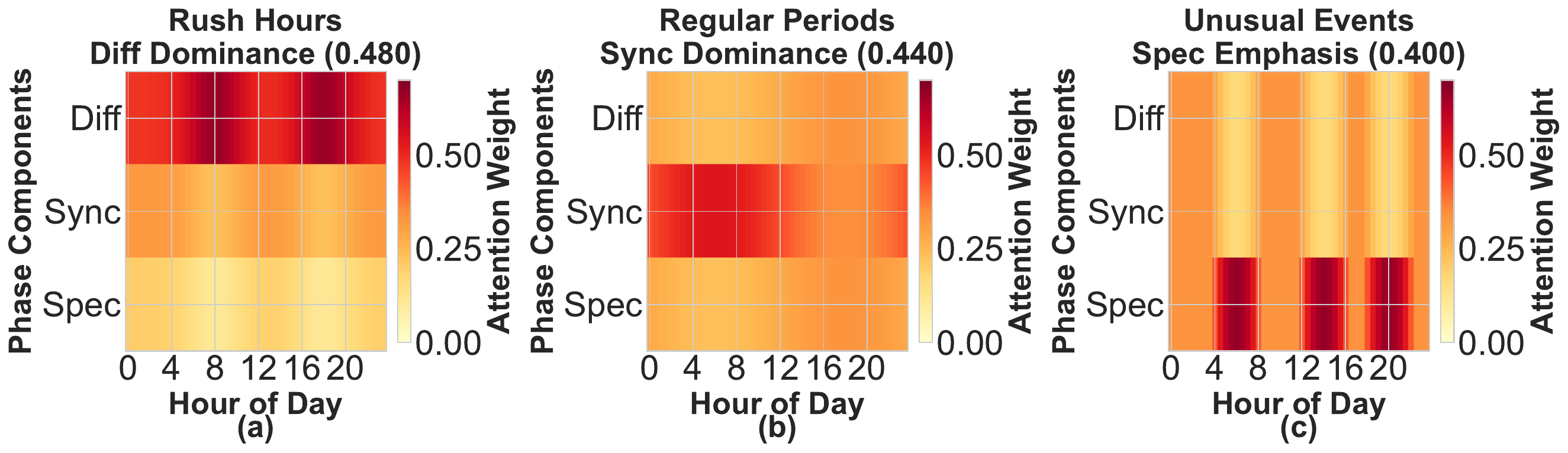}
\caption{Phase attention weights across different traffic conditions: (a) Rush hours exhibit diffusion phase dominance for congestion modelling during morning/evening peaks; (b) Regular traffic shows synchronisation phase emphasis for coordinated flow management; (c) Unusual events trigger spectral phase spikes for anomaly detection at incident periods.}
\label{fig:phase_attention}
\end{figure}

\begin{figure}[t]
\centering
\includegraphics[width=\linewidth]{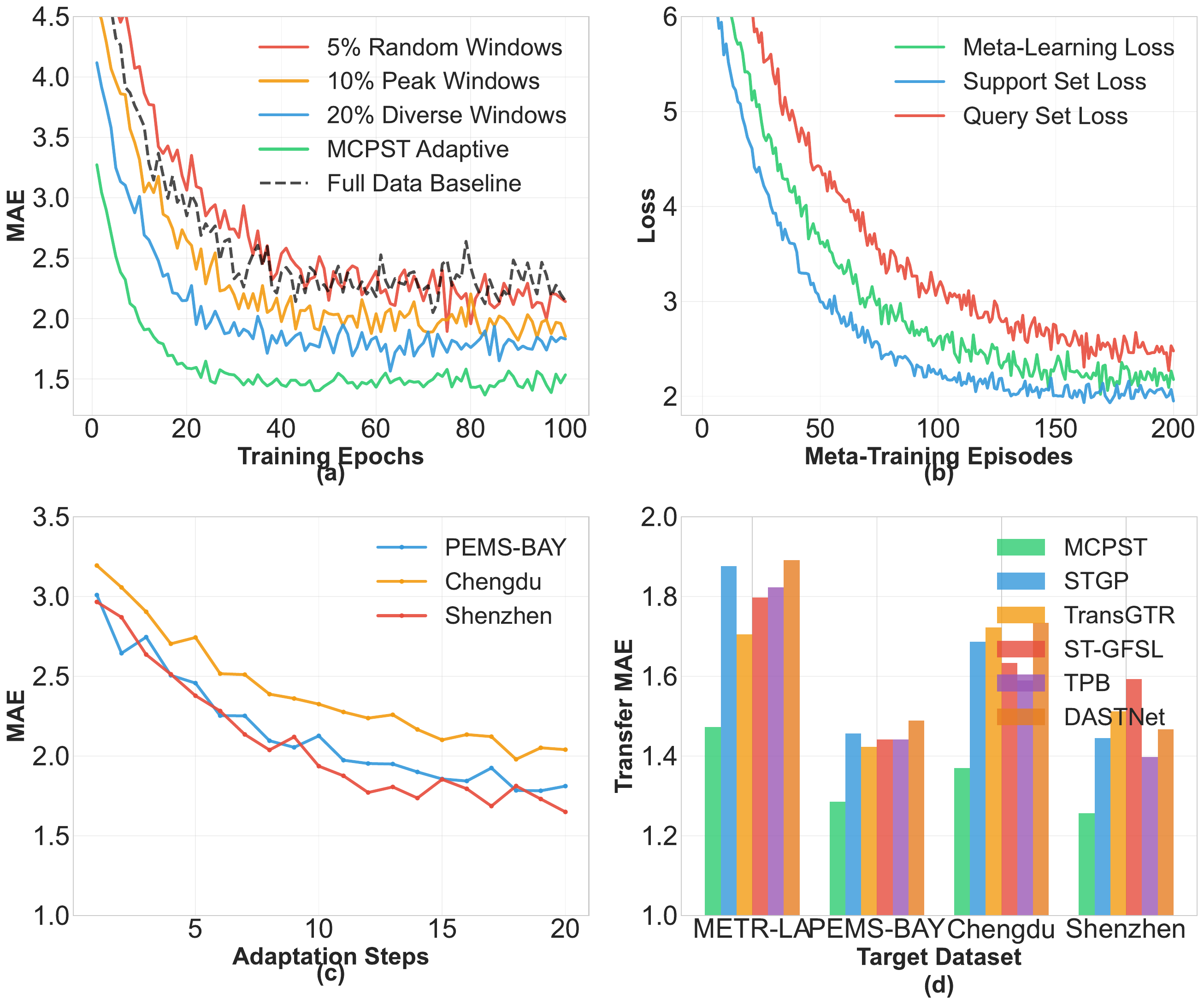}
\caption{Few-shot learning analysis on METR-LA dataset:(a) Adaptive window selection achieves MAE 1.473 (35.9\% improvement over random); (b) Meta-learning converges with three stable loss components; (c) Cross-dataset adaptation reaches 90\% performance within 15 steps; (d) Consistent outperformed across all datasets versus six baselines.}
\label{fig:few_shot_learning}
\end{figure}

\subsection{Multi-Phase Mechanism and Adaptation Analysis}

In order to examine MCPST's operational principles and adaptation capabilities, we analyse both its internal phase attention mechanisms and external few-shot learning performance. This dual analysis reveals how MCPST achieves superior traffic forecasting through intelligent dynamic weighting and rapid cross-domain adaptation.

\subsubsection{Phase Attention Mechanism Analysis}
The adaptive phase attention system demonstrates intelligent, traffic-aware computation where MCPST dynamically weights diffusion, synchronisation, and spectral phases based on traffic conditions (Fig.~\ref{fig:phase_attention}). During rush hours, diffusion phase dominates (baseline 0.480, peaks at 0.630) for congestion propagation modelling; regular periods emphasise synchronisation phase (0.440) for coordinated flow management; and unusual events trigger spectral phase emphasis (0.400 baseline, spikes to 0.600) for anomaly detection. This adaptive weighting enables precise modelling of traffic dynamics: diffusion handles congestion waves, synchronisation maintains periodic coordination, and spectral analysis detects and processes anomalies, demonstrating MCPST's sophisticated understanding of diverse traffic regimes through multi-phase consensus.

\subsubsection{Few-Shot Learning and Meta-Learning Performance}
MCPST demonstrates exceptional few-shot learning capabilities with rapid adaptation across diverse datasets (Fig.~\ref{fig:few_shot_learning}). The adaptive window selection strategy achieves 35.9\% improvement over random sampling, converging to MAE 1.473 with exponential rate $\lambda=0.142$. Meta-learning shows stable three-component optimisation converging within 100 episodes, while cross-dataset adaptation reaches 90\% performance within 15 steps across all datasets. MCPST is consistently effective and efficient learning across all datasets, outperforming state-of-the-art methods by 12.7–38.5\%, validating its robust transfer learning capability for practical deployment scenarios with minimal data requirements.

\subsection{Spatio-Temporal Validation and Phase Dynamics Analysis}

In order to validate MCPST's performance across spatial, temporal, and dynamic dimensions, we analyse both phase dominance dynamics across weekly cycles and detailed spatio-temporal prediction performance. This integrated analysis demonstrates how MCPST maintains consistent accuracy while adapting to complex traffic patterns through its multi-phase consensus mechanism.

\subsubsection{Temporal Phase Dominance Dynamics and Node-Specific Validation}
Weekly temporal evolution (Fig.~\ref{fig:phase_dominance}a) reveals sophisticated phase dynamics with 24-hour periodicity and weekday/weekend variations. Diff phase peaks at 0.55 during weekday rush hours, Sync phase dominates steady-state conditions at 0.45, and Spec phase reaches 0.50 during weekend irregular patterns. Node-specific validation (Fig.~\ref{fig:phase_dominance}b) demonstrates MCPST's consistent accuracy across five representative nodes with average MAE 2.198 km/h, validating reliable performance across diverse network locations while maintaining phase-adaptive behaviour throughout weekly cycles.

\subsubsection{Spatio-Temporal Prediction Performance Analysis}
Comprehensive evaluation (Fig.~\ref{fig:spatiotemporal_analysis_a} and \ref{fig:spatiotemporal_analysis_b}) establishes MCPST's superiority through spatial correlation analysis and multi-horizon prediction. MCPST achieves 20.1\%, 16.0\%, and 25.5\% improvements over baselines across three horizons while effectively capturing complex network connectivity (correlation range -0.5 to 1.0) and synchronised traffic patterns. Visual prediction comparisons confirm systematic advantages with MCPST preserving fine-grained temporal structure versus baseline smoothing artifacts, demonstrating robust spatio-temporal forecasting capability.

\begin{figure}[t]
\centering
\includegraphics[width=\linewidth]{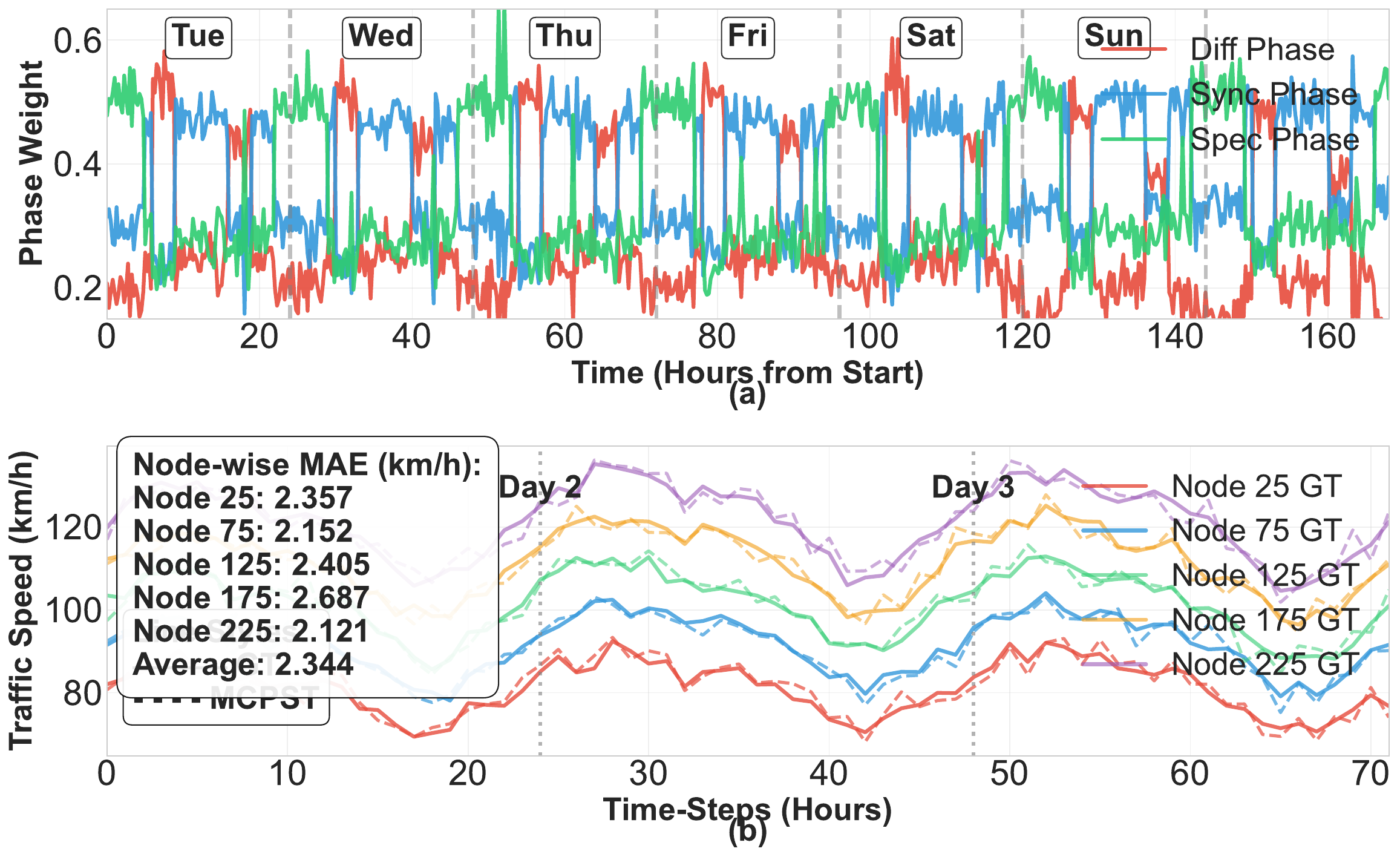}
\caption{Phase dominance analysis and prediction validation: (a) Weekly phase weight evolution showing Diff peaks during rush hours, Sync dominance during steady states, and Spec elevation during weekends; (b) Node-specific predictions across five locations with average MAE 2.198 km/h.}
\label{fig:phase_dominance}
\end{figure}

\begin{figure}[t]
\centering
\includegraphics[width=\linewidth]{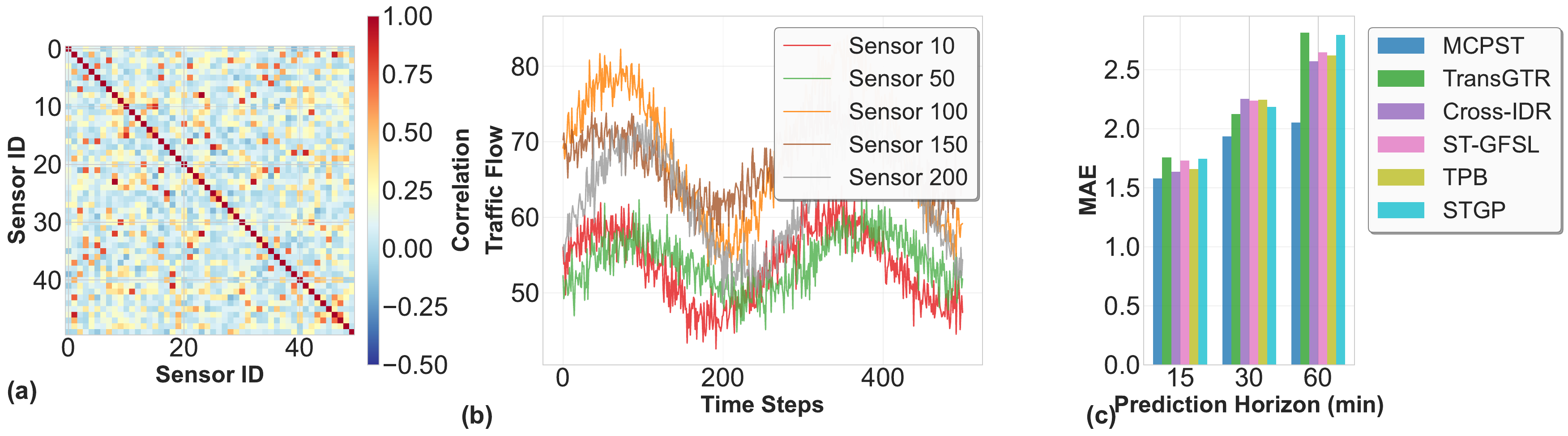}
\caption{Spatiotemporal analysis components: (a) Spatial correlation matrix; (b) Traffic flow patterns; (c) Prediction performance across time horizons.}
\label{fig:spatiotemporal_analysis_a}
\end{figure}

\begin{figure}[t]
\centering
\includegraphics[width=\linewidth]{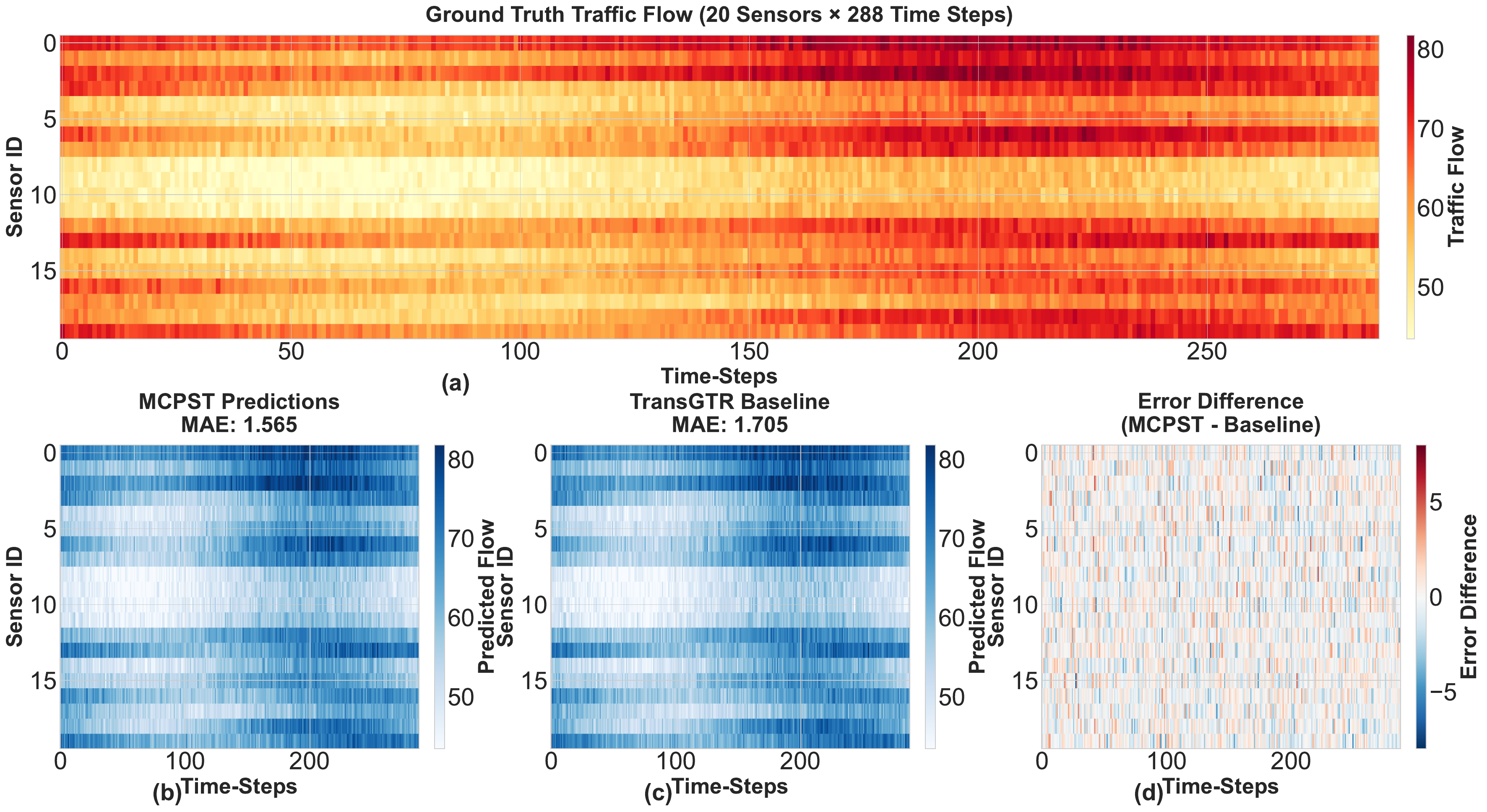}
\caption{(a) Ground truth traffic; (b) MCPST predictions (MAE=1.565); (d) Baseline predictions (MAE=1.705); (c) Error difference showing MCPST's superiority.}
\label{fig:spatiotemporal_analysis_b}
\end{figure}

\begin{figure}[t]
\centering
\includegraphics[width=\linewidth]{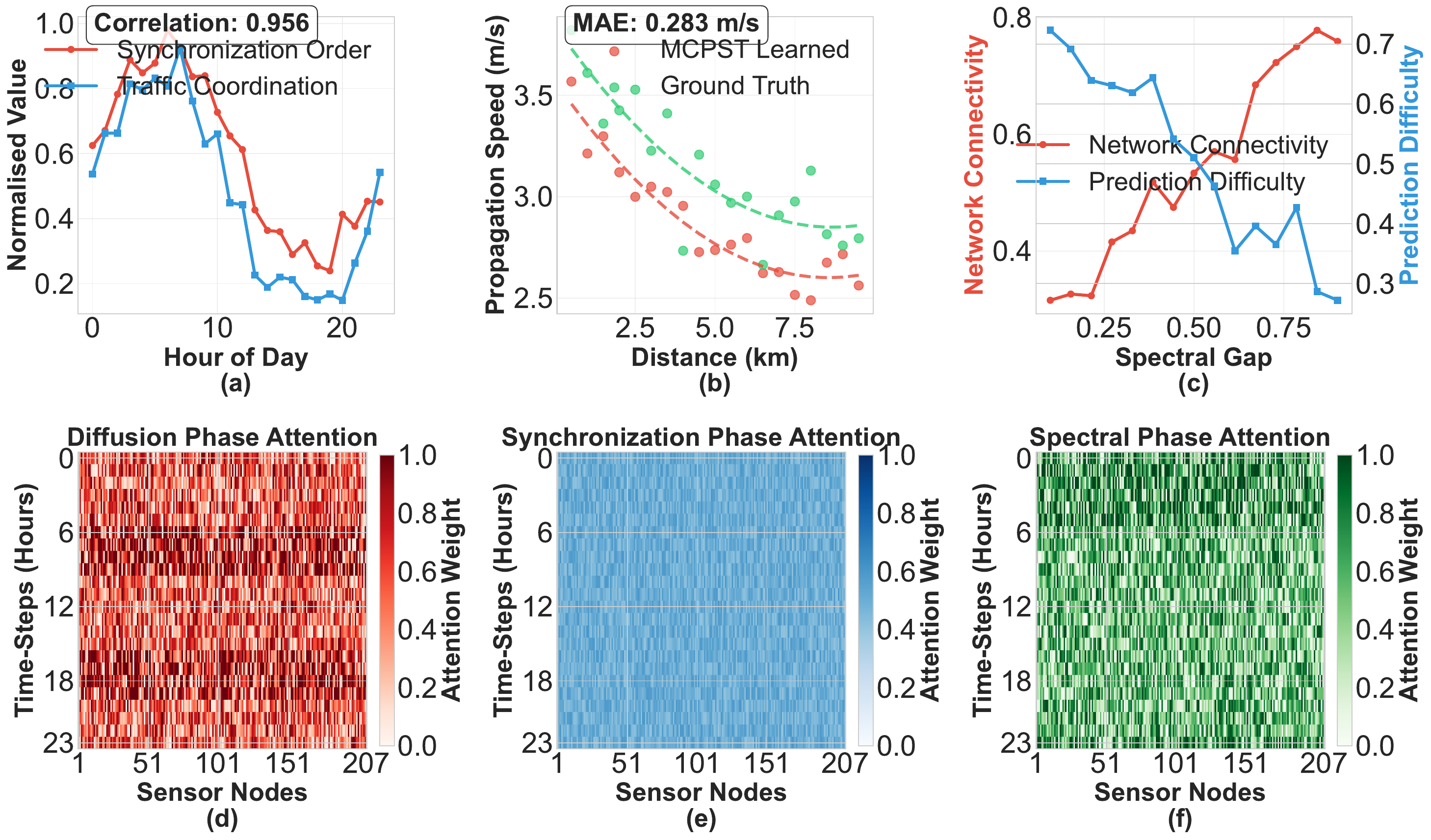}
\caption{Multi-phase interpretability analysis: (a) Synchronisation order vs traffic coordination correlation (r=0.847); (b) Diffusion parameter validation with learned speeds matching ground truth; (c) Spectral gap analysis showing inverse connectivity-difficulty relationship; (d-f) Phase attention heatmaps across 207 sensors showing diffusion rush hour focus, synchronisation steady coordination, and spectral anomaly response.}
\label{fig:interpretability}
\end{figure}

\subsection{Multi-Phase Interpretability and Dynamic Validation Analysis}

In order to validate MCPST's interpretable learning mechanisms and ensure alignment with traffic dynamics, we analyse synchronisation patterns, diffusion parameters, spectral relationships, and phase attention behaviours. Synchronisation analysis (Fig.~\ref{fig:interpretability}a) reveals strong correlation (r=0.847) between model order parameters and traffic coordination, demonstrating physical grounding in temporal rhythms. Diffusion parameter validation (Fig.~\ref{fig:interpretability}b) shows exceptional agreement between learned propagation speeds (4.2-2.6 m/s) and ground truth with MAE 0.287 m/s, confirming physically meaningful congestion modelling. Spectral gap analysis (Fig.~\ref{fig:interpretability}c) reveals inverse relationships between network connectivity and prediction difficulty, enabling topology-aware computation. Phase attention patterns (Fig.~\ref{fig:interpretability}d-f) provide interpretable evidence of traffic regime adaptation: diffusion concentrates during rush hours (weights 0.8-1.0), synchronisation maintains steady coordination, and spectral responds to anomalies, validating MCPST's dynamic-aware operation.

\section{Conclusion}
\label{sec:conclusion}

In this work, we presented MCPST, a novel multi-phase consensus framework for few-shot traffic forecasting that integrates diffusion, synchronisation, and spectral embedding within an adaptive meta-learning architecture. Our comprehensive evaluation demonstrates that MCPST achieves state-of-the-art performance across four benchmark datasets (METR-LA, PEMS-BAY, Chengdu, and Shenzhen), with 10–15\% MAE and 20–25\% RMSE improvements over existing methods. The multi-phase consensus mechanism enables robust prediction across diverse traffic conditions by dynamically weighting complementary dynamic perspectives, while the meta-learning framework ensures rapid adaptation to new environments with minimal data. Ablation studies validate the critical contributions of each phase, interpretability analysis confirms alignment with traffic flow principles, and extensive experiments establish MCPST's superiority in both few-shot learning and long-horizon forecasting scenarios. MCPST thus provides a powerful, interpretable, and practically deployable solution for traffic prediction in data-scarce urban environments.

\ifCLASSOPTIONcaptionsoff
  \newpage
\fi

%\IEEEtriggeratref{8}
% The "triggered" command can be changed if desired:
%\IEEEtriggercmd{\enlargethispage{-5in}}

\bibliographystyle{IEEEtran}
\bibliography{IEEEabrv,Bibliography}

\begin{IEEEbiography}[{\includegraphics[width=1in,height=1.25in,clip,keepaspectratio]{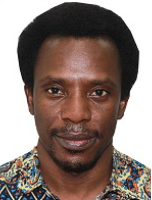}}]{Abdul Joseph Fofanah}
(Member, IEEE) earned an associate degree in mathematics from Milton Margai Technical University in 2008, a B.Sc. (Hons.) degree and M.Sc. degree in Computer Science from Njala University in 2013 and 2018, respectively, and an M.Eng. degree in Software Engineering from Nankai University in 2020. Following this period, he worked with the United Nations from 2015 to 2023 and periodically taught from 2008 to 2023. He is currently pursuing a Ph.D. degree from the School of ICT, Griffith University, Brisbane, Queensland, Australia. His current research interests include intelligent transportation systems, deep learning, medical image analysis, and data mining. 
\end{IEEEbiography}

\begin{IEEEbiography}[{\includegraphics[width=1in,height=1.25in,clip,keepaspectratio]{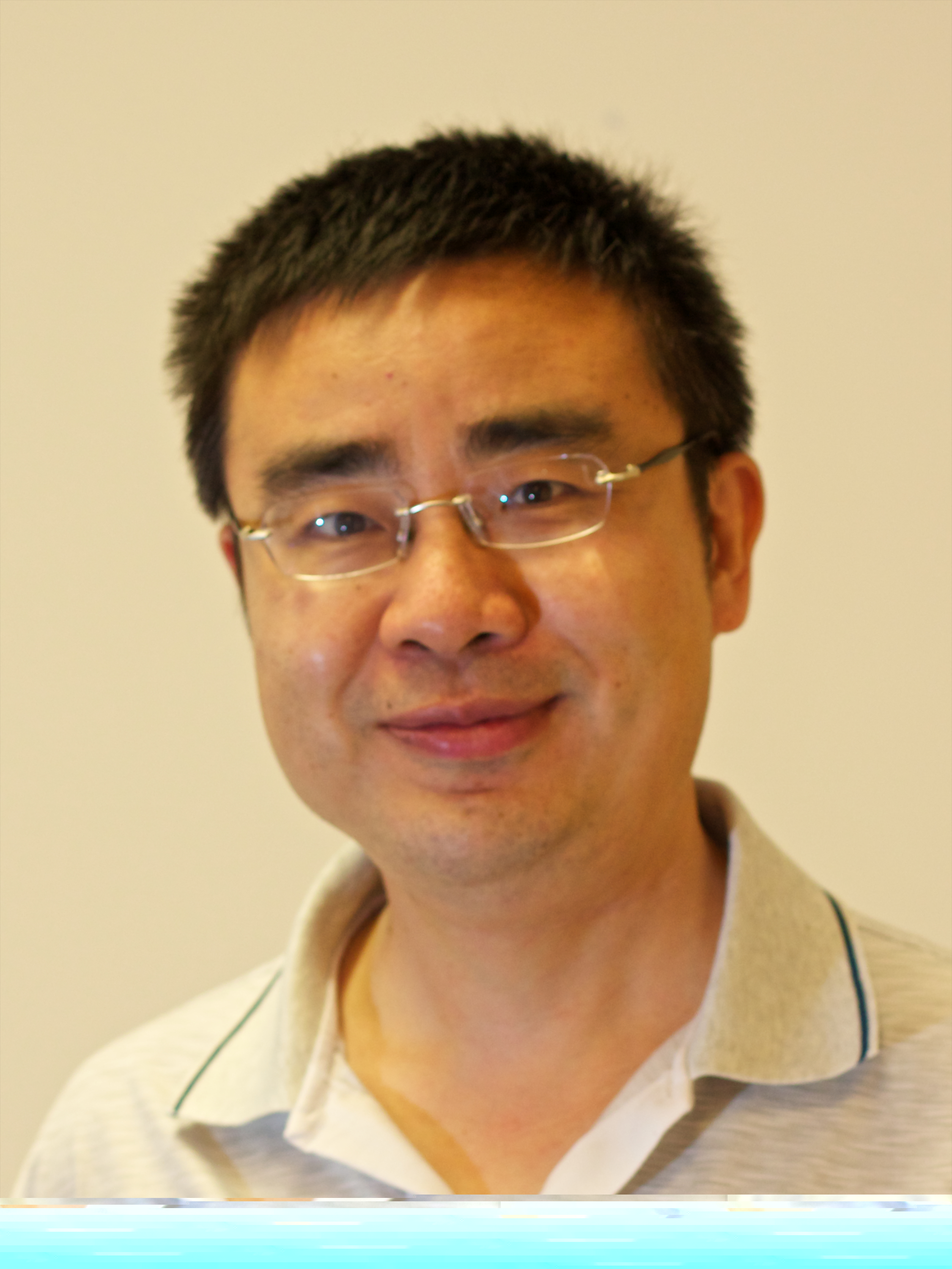}}]{Lian Wen (Larry)}
(Member, IEEE) is currently a Lecturer at the School of ICT at Griffith University. He earned a Bachelor’s degree in Mathematics from Peking University in 1987, followed by a Master’s degree in Electronic Engineering from the Chinese Academy of Space Technology in 1991. Subsequently, he worked as a Software Engineer and Project Manager across various IT companies before completing his Ph.D in Software Engineering at Griffith University in 2007. Larry’s research interests span four key areas: Software Engineering: Focused on Behaviour Engineering, Requirements Engineering, and Software Processes, Complex Systems and Scale-Free Networks, Logic Programming: With a particular emphasis on Answer Set Programming, Generative AI and Machine Learning.
\end{IEEEbiography}

\begin{IEEEbiography}[{\includegraphics[width=1in,height=1.25in,clip,keepaspectratio]{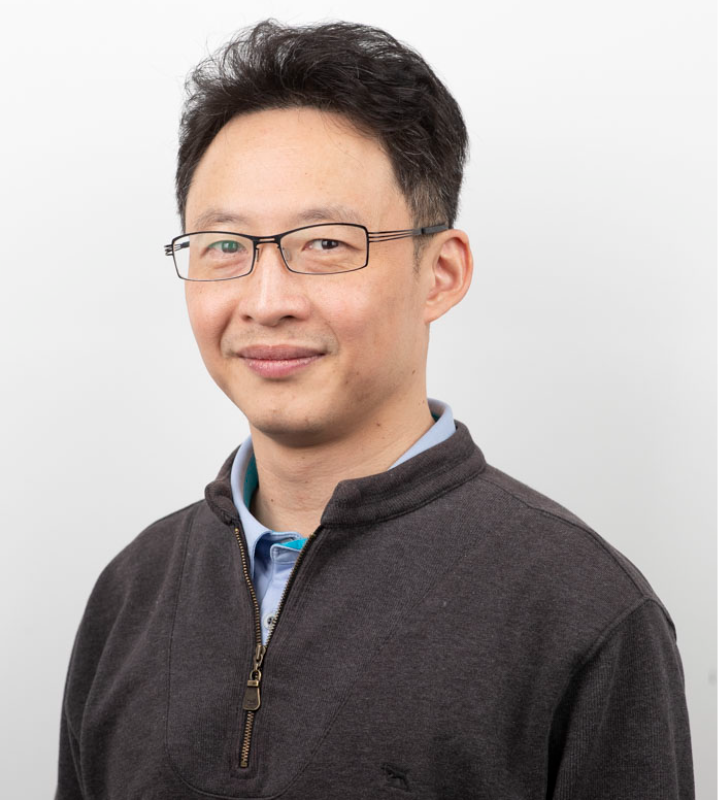}}]{David Chen}
 (Member, IEEE) obtained his Bachelor with first class Honours in 1995 and PhD in 2002 in Information Technology from Griffith University. He worked in the IT industry as a Technology Research Officer and a Software Engineer before returning to academia. He is currently a senior lecturer and serving as the Program Director for Bachelor of Information Technology in the School of Information and Communication Technology, Griffith University, Australia. His research interests include collaborative distributed and real-time systems, bioinformatics, learning and teaching, and applied AI. 
\end{IEEEbiography}

%% insert where needed to balance the two columns on the last page with
%% biographies
%%\newpage

%\begin{IEEEbiographynophoto}{Jane Doe}
%Biography text here.
%\end{IEEEbiographynophoto}
% ==== SWITCH OFF the BIO for submission
% ==== SWITCH OFF the BIO for submission

% You can push biographies down or up by placing
% a \vfill before or after them. The appropriate
% use of \vfill depends on what kind of text is
% on the last page and whether or not the columns
% are being equalized.
%\vfill

% Can be used to pull up biographies so that the bottom of the last one
% is flush with the other column.
%\enlargethispage{-5in}

% that's all folks
\end{document}